\documentclass{article} 
\usepackage{arxiv,times}

\usepackage{amsmath,amsfonts,bm}

\def\eqref#1{equation~\ref{#1}}

\def\1{\bm{1}}

\DeclareMathAlphabet{\mathsfit}{\encodingdefault}{\sfdefault}{m}{sl}
\SetMathAlphabet{\mathsfit}{bold}{\encodingdefault}{\sfdefault}{bx}{n}

\def\gD{{\mathcal{D}}}
\def\gE{{\mathcal{E}}}
\def\gF{{\mathcal{F}}}

\def\gR{{\mathcal{R}}}

\newcommand{\R}{\mathbb{R}}

\renewcommand{\eqref}[1]{(\ref{#1})} 

\usepackage{booktabs}
\usepackage{tabularx}
\newcolumntype{Y}{>{\raggedright\arraybackslash}X} 
\usepackage{amssymb}
\usepackage{amsthm}
\usepackage{mathtools}
\usepackage{graphicx}
\usepackage{wrapfig}
\usepackage{subcaption}
\usepackage{blkarray}
\newcommand{\sd}[1]{{\scriptstyle\pm#1}} 
\theoremstyle{definition}
\newtheorem{definition}{Definition}
\newtheorem{theorem}{Theorem}
\newtheorem{proposition}{Proposition}

\usepackage{hyperref}
\usepackage{url}

\title{Dirac-Interconnected Neural Elements: \\Discovering Modularity in Physical Systems Without Reduction}

\author{Anonymous authors}

\makeatletter
\newcommand{\silentthanks}[1]{{\let\@makefnmark\relax\thanks{#1}}}
\makeatother
\author{%
  Reiho Li$^{*1}$%
  \silentthanks{Correspondence to~\texttt{li.reiho.c0@elms.hokudai.ac.jp},~\texttt{matsubara@ist.hokudai.ac.jp}}
  Razmik Arman Khosrovian$^1$\,
  Takaharu Yaguchi$^{2,3}$\\
  \bf Hiroaki Yoshimura$^4$\,
  Takashi Matsubara$^{*1,3}$\\
$^1$Hokkaido University\,
$^2$Kyushu University\,
$^3$RIKEN\,
$^4$Waseda University\,
}

\iclrfinalcopy 
\begin{document}

\maketitle
\lhead{Preprint}

\begin{abstract}
    Deep learning has shown remarkable success in the data-driven modeling of dynamical systems.
    Much of its success is attributed not to the flexibility of neural networks but to inductive biases based on physical prior knowledge, such as energy conservation and symplecticity.
    However, existing methods do not fully exploit the fact that real-world physical systems are interconnections of components.
    Some methods require the interconnection to be known a priori, while others assume the system to be reducible to an ordinary differential equation (ODE) and learn only the reduced ODE, discarding the algebraic constraints imposed by the interconnection.
    Here, we propose Dirac-interconnected neural elements (DINEs), a neural network model that represents a physical system as a differential-algebraic equation (DAE), whose algebraic constraints are given by a Dirac structure in kernel representation.
    With DINEs, we simultaneously identify from data the interconnection among the components as a Dirac structure and learn the characteristics of the components as neural networks.
    This allows us to keep the learned subsystems in unreduced form and isolate or compose them to make a new system without retraining.
    Moreover, DINEs can handle partially observable systems.
    Experimental results demonstrate these capabilities on physical systems beyond the reach of existing methods.
\end{abstract}

\section{Introduction}
Deep learning has been successful in a wide range of domains~\citep{bishop2024deep,noor2025survey}, including the data-driven modeling of dynamical systems~\citep{wang2023scientific,legaard2023constructing}.
Many approaches approximate the dynamics directly from observed trajectories, as in neural ordinary differential equations (Neural ODEs)~\citep{chen2018neural}.
Since such black-box models often exhibit physically inconsistent behavior in long-term predictions, physical prior knowledge has been embedded into models as an inductive bias.
In particular, Hamiltonian, constrained Hamiltonian, and port-Hamiltonian neural networks are inspired by energy-based modeling, where an energy function is defined on the state space, and a single vector field is derived from it, ensuring the energetic behavior of the system by construction~\citep{greydanus2019hamiltonian,finzi2020simplifying,matsubara2020deep,desai2021port,moradi2026port}.

However, real-world physical systems are interconnections of components, such as masses, springs, and dampers in mechanical systems, and inductors, capacitors, and resistors in electrical circuits.
Their power exchange has often been described by a graphical notation, bond graphs~\citep{paynter1961analysis,karnopp2012system,tabatabaian2021engineering}.
The model becomes interpretable and its components become reusable if it identifies from observed data this internal structure, namely the characteristics of the components (i.e., their constitutive relations) and the interconnection among them.
Beyond prediction, such models also support the identification of physical parameters, diagnosis, and design.
Because the interconnection among the components gives rise to algebraic constraints, interconnected systems are in general described by differential-algebraic equations (DAEs)~\citep{duindam2009modeling,vanderschaft2014port}.
Neural port-Hamiltonian DAEs (N-pHDAEs)~\citep{nphdae} and port-Hamiltonian DAE (pH-DAE) identification~\citep{phdae} approximate interconnected systems with the interconnection known and fixed in advance.
Poisson-Dirac neural networks (PoDiNNs)~\citep{podinn} identify the interconnection from observed data, but only in a reduced form.
They assume that the system is reducible to an ODE and that all component states are observable, thereby discarding the algebraic constraints and excluding partially observable systems.
Table~\ref{tab:comparison} summarizes the comparison: no existing model both learns the interconnection and keeps its algebraic constraints.

\begin{table}[t]
    \centering
    \caption{Comparison with related methods.}
    \label{tab:comparison}
    \scriptsize
    \setlength{\tabcolsep}{3pt}
    \begin{tabular}{lcccc}
        \toprule
        \textbf{Model}                                     & {\bfseries\begin{tabular}[c]{@{}c@{}}Ensuring\\Energetic Behavior\end{tabular}}
                                                           & {\bfseries\begin{tabular}[c]{@{}c@{}}Identifying\\Interconnection\end{tabular}}
                                                           & {\bfseries\begin{tabular}[c]{@{}c@{}}Learning\\Constraints\end{tabular}}
                                                           & {\bfseries\begin{tabular}[c]{@{}c@{}}Learning from\\Partial Observation\end{tabular}}                                          \\
        \midrule
        Neural ODEs~\citep{chen2018neural}                 & --                                                                                    & --         & --           & --         \\
        HNNs, LNNs, SymODENs, pHNNs $^\dagger$             & \checkmark                                                                            & --         & --           & --         \\
        OE-pHNNs~\citep{moradi2026port}                    & \checkmark                                                                            & --         & --           & \checkmark \\
        CHNNs, CLNNs~\citep{finzi2020simplifying}          & \checkmark                                                                            & --         & given $^\S$  & --         \\
        Compositional pHNNs~\citep{neary2023compositional} & \checkmark                                                                            & given      & --           & --         \\
        N-pHDAEs~\citep{nphdae}                            & \checkmark                                                                            & given      & given        & --         \\
        pH-DAE identification~\citep{phdae}                & \checkmark                                                                            & given      & given        & \checkmark \\
        Latent ODEs, CT-SUBNET $^\ddagger$                 & --                                                                                    & --         & if reducible & \checkmark \\
        PoDiNNs~\citep{podinn}                             & \checkmark                                                                            & \checkmark & if reducible & --         \\
        \midrule
        DINEs (ours)                                       & \checkmark                                                                            & \checkmark & \checkmark   & \checkmark \\
        \bottomrule
    \end{tabular}\\\raggedright
    $^\dagger$ \citet{greydanus2019hamiltonian,cranmer2020lagrangian,zhong2020symplectic,zhong2020dissipative,desai2021port}. $^\ddagger$ \citet{rubanova2019latent,beintema2023continuous}. $^\S$ Available only for holonomic constraints.
\end{table}

\begin{wrapfigure}{r}{0.34\textwidth}
    \vspace*{-4mm}
    \centering
    \includegraphics[width=\linewidth]{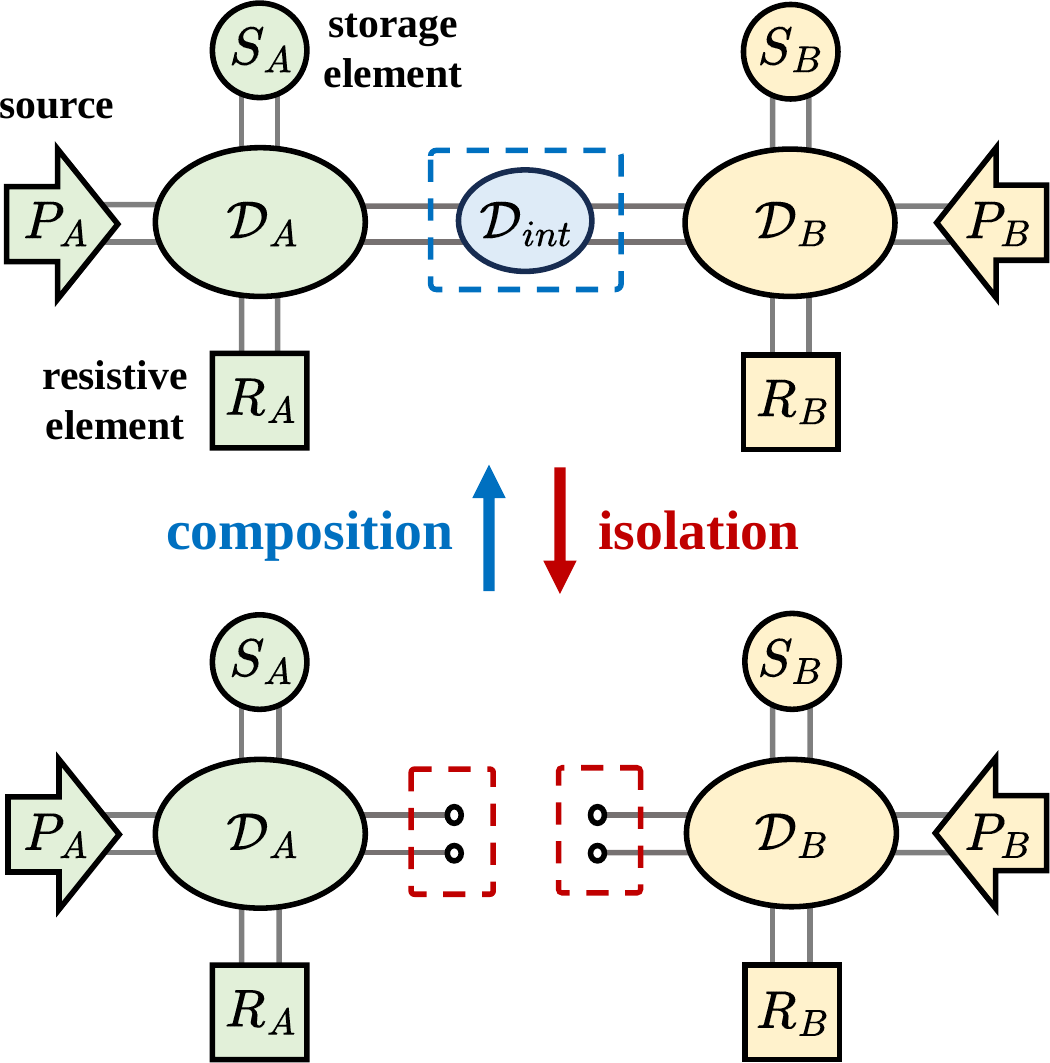}
    \vspace*{-4mm}
    \caption{Conceptual diagram of DINEs.}
    \label{fig:concept}
    \vspace*{-3mm}
\end{wrapfigure}
Given the above, we propose Dirac-interconnected neural elements (DINEs), a neural network model that represents a physical system as a DAE whose algebraic constraints are given by a Dirac structure in kernel representation (see Figure~\ref{fig:concept})~\citep{duindam2009modeling,vanderschaft2014port}.
Our contributions are as follows:
(i) By minimizing the DAE residual, DINEs simultaneously learn the interconnection as a Dirac structure and the constitutive relations of the components as neural networks.
(ii) Since the system is not reduced to an ODE, DINEs can isolate or compose the learned subsystems without retraining.
(iii) DINEs also handle partially observable systems, in which some component states are unobserved.
(iv) Experimental results demonstrate these capabilities on physical systems beyond the reach of existing methods.

\section{Background}
\label{sec:background}
\subsection{Port-based modeling and Dirac structures}
\label{sec:port}
\textbf{Port-based modeling.}
Port-based modeling represents a physical system as an interconnection of elements, each of which idealizes a basic physical behavior~\citep{duindam2009modeling,vanderschaft2014port}.
Each element has \emph{ports}, through which it exchanges energy with the others.
The exchange is described by a pair of \emph{port variables}, a \emph{flow} $f$ (e.g., velocity or current) and an \emph{effort} $e$ (e.g., force or voltage), whose duality product gives the power flowing out of the element through the port; their physical nature depends on the domain.
A C-type storage element (C; e.g., a spring or a capacitor) has a state $q$ and stores energy $H(q)$ with $f = -\dot{q}$ and $e = \partial H / \partial q$, and an I-type storage element (I; e.g., a mass or an inductor) has a state $p$ and stores energy $H(p)$ with $e = -\dot{p}$ and $f = \partial H / \partial p$; in both cases $\langle e \mid f \rangle = -\dot{H}$ holds, and these equations are called the energy-storage relations.
A resistive element (R; e.g., a damper or a resistor) is defined by a resistive relation $\gR(f, e) = 0$ that satisfies $e f \le 0$ and thus dissipates energy.
A source (Sf or Se) prescribes a flow or an effort as a function of time.
The interconnection is a multiport element that merely routes energy, that is, it is \emph{power continuous} with zero net power at all times: in bond-graph theory, it is called the junction structure and is composed of 0-junctions and 1-junctions, which generalize Kirchhoff's current and voltage laws, transformers (TF), and gyrators (GY)~\citep{karnopp2012system}.
See Appendix~\ref{app:domains} for details.

\textbf{Dirac structure.}
The interconnection of port-based modeling is formalized as a Dirac structure~\citep{vanderschaft1995hamiltonian,dalsmo1998representations}, a geometric object introduced by \citet{courant1990dirac} and \citet{dorfman1993dirac}.
Let $\gF$ be an $n$-dimensional linear space, whose element $f$ is the vector of the flows at the ports.
Let $\gE := \gF^*$ be its dual space, whose element $e$ is the vector of the efforts at the ports.
For $(f, e) \in \gF \times \gE$, the total power is the duality product $\langle e \mid f \rangle$, which is equal to $e^\top f$ when $f$ and $e$ are represented as column vectors.

\begin{definition}[Dirac structure; \citealp{courant1990dirac,duindam2009modeling,vanderschaft2014port}]
    \label{def:dirac}
    A Dirac structure on $\gF \times \gE$ is a subspace $\gD \subset \gF \times \gE$ such that
    (i) $\langle e \mid f \rangle = 0$ for all $(f, e) \in \gD$, and
    (ii) $\dim \gD = \dim \gF$.
\end{definition}
Every Dirac structure $\gD \subset \gF \times \gE$ can be represented in the kernel representation~\citep[][Sec.~5.1]{dalsmo1998representations,vanderschaft2014port}.
With linear coordinates for $\gF$ and the dual basis for $\gE$, it is given by
\begin{equation}
    \gD
    = \left\{ (f, e) \in \gF \times \gE
    \;\middle|\; F f + E e = 0 \right\},
    \label{eq:kernel}
\end{equation}
where $F$ and $E$ are $n \times n$ matrices satisfying
\begin{equation}
    E F^\top + F E^\top = 0 \mbox{ and }\operatorname{rank} [\, F \mid E \,] = n.
    \label{eq:conditions}
\end{equation}
Conversely, any pair of matrices satisfying these two conditions defines a Dirac structure by Eq.~\eqref{eq:kernel}.

\textbf{Dynamics on a Dirac structure.}
Each of the 0-junction, the 1-junction, the transformer, and the gyrator defines a Dirac structure, and the composition of Dirac structures is again a Dirac structure.
Hence, the junction structure is a Dirac structure~\citep{duindam2009modeling,vanderschaft2014port}, which we denote by $\gD$.
Each row of $F f + E e = 0$ is a linear constraint that the interconnection imposes on the port variables.
The entries of $F$ and $E$ represent the interconnection, including the ratios of the transformers and gyrators.

The dynamics is defined by the condition that the port variables belong to $\gD$ at all times~\citep{duindam2009modeling,vanderschaft2014port}:
\begin{equation}
    (f(t), e(t)) \in \gD \;\text{ for all } t,
    \quad\text{that is,}\quad
    F f(t) + E e(t) = 0 \;\text{ for all } t.
    \label{eq:dirac-dae}
\end{equation}
Here, $f = (f_S, f_R, f_P)$ and $e = (e_S, e_R, e_P)$ are the port variables of the storage elements, the resistive elements, and the sources, called the storage, resistive, and source ports, respectively.
The numbers of these ports are $n_S$, $n_R$, and $n_P$, respectively, with $n = n_S + n_R + n_P$.
We partition $F$ and $E$ in Eq.~\eqref{eq:kernel} in the same way as $f$ and $e$, that is, $F = [\, F_S \mid F_R \mid F_P \,]$ and $E = [\, E_S \mid E_R \mid E_P \,]$.
The port variables are oriented out of the elements and thus into the Dirac structure, so that condition (i) of Definition~\ref{def:dirac} states that the net power flowing into the interconnection is zero.

Let $z_i$ be the state of the $i$-th storage element, which is $q_i$ for C-type and $p_i$ for I-type, and let $z = (z_1, \dots, z_{n_S}) \in \R^{n_S}$; only the storage elements have states, so $z$ is the state of the entire system.
The Hamiltonian $H : \R^{n_S} \to \R$ is the total stored energy, that is, the sum of the energies of all the storage elements, $H(z) = \sum_{i=1}^{n_S} H_i(z_i)$, where $H_i$ is the energy of the $i$-th storage element.
The resistive ports satisfy a resistive relation between $f_R$ and $e_R$ such that $\langle e_R \mid f_R \rangle \le 0$, that is, the resistive elements only absorb power.
For each source port, the prescribed variable (the flow of an Sf element or the effort of an Se element) is an external input, and its conjugate variable is determined by the system as an output; we denote the vectors of the inputs and the outputs by $u$ and $y$, respectively.

The constitutive relations are thus given by the Hamiltonian $H$ and the resistive relation $\gR$.
Substituting the constitutive relations into Eq.~\eqref{eq:dirac-dae} yields, in general, a differential-algebraic equation (DAE) for the state $z$ and the remaining port variables with the input $u$.

\textbf{Reduction to an ODE.}
Under certain conditions, the DAE can be reduced to an ODE in $z$.
Exchanging the flow and the effort of every I-type port, every Sf port, and some resistive ports preserves the conditions of Eq.~\eqref{eq:conditions}~\citep[Sec.~5.3]{vanderschaft2014port}; we denote the resulting variables with $\tilde{\cdot}$.
Then, the flows of all the storage ports are the time derivatives of the states up to sign, $\dot{z}=-\tilde{f}_S$.
If $\tilde{F}$ is invertible, Eq.~\eqref{eq:kernel} is equivalent to $\tilde{f} = B \tilde{e}$ with the skew-symmetric matrix $B := -\tilde{F}^{-1} \tilde{E}$, which is called the input-output representation of the Dirac structure.
This form is an ODE in $z$ once the remaining port variables can be eliminated through the constitutive relations, and is what PoDiNNs employ~\citep{podinn}.

Under further assumptions, eliminating the resistive port variables yields the input-state-output form of port-Hamiltonian systems~\citep{maschke1992port,vanderschaft1995hamiltonian,vanderschaft2014port},
\begin{equation}
    \dot{z} = \big( J - R \big)\, \nabla H(z) + G\, u,
    \qquad
    y = G^\top \nabla H(z),
    \qquad
    R := G_R \hat{R} G_R^\top,
    \label{eq:reduction}
\end{equation}
where $u = \tilde{e}_P$, $y = \tilde{f}_P$, $\hat{R}$ is the coefficient of the linear resistive relations, and a skew-symmetric matrix $J = -J^\top$, the resistive connection $G_R$, and the gain $G$ are blocks of $B$ up to sign.
This form is what many existing methods employ (as discussed below).

We refer to these transformations and eliminations of variables as \emph{reduction}, which has two costs.
(1) It mixes the interconnection with the elements, since $R$ combines the resistive relations $\hat{R}$ with the blocks $G_R$ that connect them and cannot be uniquely decomposed into the two.
(2) It assumes that an ODE exists, but it does not in general.
For example, Eq.~\eqref{eq:reduction} cannot describe nonlinear frictions, and both forms are not applicable to the balanced force of several nonlinear springs, and a capacitor connected with a voltage source in parallel (see also Appendix~\ref{app:reduction}).

\subsection{Related work}
\label{sec:related}

\textbf{Learning the dynamics as a single ODE.}
Neural ODEs~\citep{chen2018neural} approximate a general ODE of the form $\dot{z} = \varphi_\theta(z)$ with a neural network, where no mechanism prevents the learned model from creating or destroying energy in long-term predictions.
Hamiltonian neural networks (HNNs)~\citep{greydanus2019hamiltonian} and their extensions, Symplectic ODE-Nets (SymODENs)~\citep{zhong2020symplectic}, neural symplectic forms~\citep{chen2021neural}, Dissipative SymODENs~\citep{zhong2020dissipative}, port-Hamiltonian neural networks (pHNNs)~\citep{desai2021port}, pseudo-Hamiltonian neural networks~\citep{eidnes2023pseudo}, and output-error pHNNs (OE-pHNNs)~\citep{moradi2026port}, are special cases of Eq.~\eqref{eq:reduction} that learn $H_\theta$ with $J$, $R$, and $G$ fixed or learned.
Lagrangian neural networks (LNNs) and deep Lagrangian networks~\citep{lutter2019deep,cranmer2020lagrangian}, constrained HNNs and LNNs (CHNNs and CLNNs)~\citep{finzi2020simplifying}, which add holonomic constraints given in advance, are considered also as the special cases with designed $J$.
See Appendix~\ref{app:reduction} for the correspondence.
In the methods, $H_\theta$ is a single function of the whole state $z$, or at most separated into the kinetic and potential energies but not into the energies of the individual elements.
The resistive elements appear only in $R$, so that neither the elements nor their interconnection can be identified from the learned model.
The constraints imposed by the interconnection are either eliminated manually before training by the choice of $z$ or given in advance.

\textbf{Learning the components under a given interconnection.}
Several methods learn the subsystems or elements separately, while the interconnection is specified before training, or at most fitted as a composition between ODE subsystems.
Compositional pHNNs~\citep{neary2023compositional} learn each subsystem in the form of Eq.~\eqref{eq:reduction} for a given $J_i$ and compose them through an additional skew-symmetric matrix $J_{\mathrm{int}}$, which is given or learned from data of the composite system as a matrix or a neural network; the composite system is again an ODE of the same form with $H = \sum_i H_{\theta,i}$.
N-pHDAEs~\citep{nphdae} employ the DAE of an electrical network in the descriptor form~\citep{beattie2018linear,mehrmann2023control} with the circuit topology known and all of its variables observable, that is, $F$ and $E$ of Eq.~\eqref{eq:dirac-dae} are given by Kirchhoff's laws and only the constitutive relations are learned; separately trained N-pHDAEs can be composed without additional training if the additional interconnection is given as a matrix.
\citet{phdae} identify a linear pH-DAE of the same family from input-output data with unobserved states.

\textbf{Identifying the interconnection from data.}
The closest to our work is PoDiNNs~\citep{podinn}, which employ the Dirac structure in the input-output representation $\tilde{f} = B_\theta \tilde{e}$.
Given the list of elements, they learn $B_\theta$ as a constant matrix, the resistive relations as explicit maps $\tilde{e}_R = \hat{\gR}_\theta(\tilde{f}_R)$, and $H_\theta$ under the assumption that all states are observed.
However, after reduction, the interconnection represented by $B_\theta$ depends on the elements' states or parameters in general.
\citet{podinn} themselves acknowledge in their Appendix~B.2 that, to represent the equal velocities of two rigidly coupled masses, $B_\theta$ depends on the masses and is no longer determined by the interconnection alone.
Also, PoDiNNs are inapplicable to some situations, such as the balanced force of several nonlinear springs (see Task~1 in Section~\ref{sec:experiments}).

\textbf{Learning with unobserved states.}
When only an observation $y$ of the system is available, the state has to be inferred along with the model.
The continuous-time subspace encoder network (CT-SUBNET)~\citep{beintema2021nonlinear,beintema2023continuous} learns a black-box state-space model, $\dot{z} = \varphi_\theta(z, u)$ and $y = h_\theta(z)$ with a predetermined state dimension $n_z$, where an encoder estimates the state at the start of each window of the time series from the past outputs and inputs and the prediction error over the window is minimized; Latent ODEs~\citep{rubanova2019latent} are similar with a probabilistic encoder and decoder.
OE-pHNNs and the pH-DAE identification replace the black-box model by Eq.~\eqref{eq:reduction} and by the linear DAE, respectively, but limit the observation $y$ to the output conjugate to the input, so that other quantities, such as the currents of the resistors inside a circuit, are not available as observations.
In all of these methods, the unobserved state is inferred by the encoder only at the start of each window and is obtained after integrating the model forward in time.

\section{Dirac-interconnected neural elements}
\label{sec:method}

\paragraph{Learnable elements and interconnection.}
\label{sec:model}
Here, we propose DINEs as a learnable model of Eq.~\eqref{eq:dirac-dae}, which is defined by constitutive relations and a Dirac structure.
For each storage element, we learn a differentiable function $H_{\theta,i}$ of its state $z_i$ by a neural network, and thus the total energy is $H_\theta(z) = \sum_{i=1}^{n_S} H_{\theta,i}(z_i)$.
For each resistive element, we learn the resistive relation $\gR_{\theta,j}(f_{R,j}, e_{R,j}) = 0$ by a neural network as an explicit map $e_{R,j} = \hat{\gR}_{\theta,j}(f_{R,j})$ from the flow to the effort.
Dissipativity $\langle e_R \mid f_R \rangle \le 0$ is not imposed for simplicity, while it is possible with a sign constraint on $\hat{\gR}_{\theta,j}$.
We suppose sources are prescribed as functions of time.

For the Dirac structure, the matrices $F,E$ should satisfy the two conditions of Eq.~\eqref{eq:conditions} for every value of the parameters.
To avoid constrained optimization, we suppose that the interconnection is separable and constant, and that the number $n_f$ of flow constraints (and thus the number $n_e=n-n_f$ of effort constraints) is given in advance.
A Dirac structure is \emph{separable} if the rows of Eq.~\eqref{eq:kernel} can be chosen so that each involves only flows or only efforts; it holds if the system has no gyrators, since junctions and transformers are separable and so is their composition.
The interconnection is constant if the ratios of its transformers do not depend on the state~\citep[Sec.~2]{vanderschaft2014port}.

\begin{theorem}[Exact parameterization of separable Dirac structures]\label{thm:dirac_parameterize}
    Let $n_f$ and $n_e$ be nonnegative integers with $n_f+n_e=n$, and let $W_0\in\R^{n\times n}$ be any fixed orthogonal matrix.
    For $\Omega\in\R^{n_f\times n_e}$, define $W(\Omega) = \exp\big(S(\Omega)\big)\, W_0$ with $S(\Omega) = \begin{bsmallmatrix} 0 & \Omega \\ -\Omega^\top & 0 \end{bsmallmatrix}$, let $W_f\in\R^{n_f\times n}$ and $W_e\in\R^{n_e\times n}$ consist of the first $n_f$ and last $n_e$ rows of $W(\Omega)$, respectively, and set $F = \begin{bsmallmatrix} W_f \\ 0 \end{bsmallmatrix}$ and $E = \begin{bsmallmatrix} 0 \\ W_e \end{bsmallmatrix}$.
    Denote the resulting kernel by $\gD(\Omega):=\{(f,e)\in\gF\times\gE\mid Ff+Ee=0\}.$
    Then, (i) For every $\Omega$, $(F,E)$ satisfies the conditions of Eq.~\eqref{eq:conditions}, and $\gD(\Omega)$ is a separable Dirac structure with exactly $n_f$ independent flow constraints.
    (ii) For every separable Dirac structure $\gD_\star$ on $\gF\times\gE$ with exactly $n_f$ independent flow constraints, there exists $\Omega\in\R^{n_f\times n_e}$ such that $\gD_\star\!=\!\gD(\Omega)$ for the same fixed $W_0$.
\end{theorem}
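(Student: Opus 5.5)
For part (i), the plan is a direct computation from the orthogonality of $W(\Omega)$. Since $S(\Omega)$ is skew-symmetric, $\exp(S(\Omega))$ is orthogonal, so $W(\Omega)$ is orthogonal as a product of orthogonal matrices. Its rows are therefore orthonormal, which gives $W_fW_e^\top=0$, $W_fW_f^\top=I_{n_f}$ and $W_eW_e^\top=I_{n_e}$.

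With this, the block products are easy to evaluate. We have $FE^\top=\begin{bsmallmatrix} 0 & W_fW_e^\top\\ 0&0\end{bsmallmatrix}=0$. Likewise $EF^\top=(FE^\top)^\top=0$, so $EF^\top+FE^\top=0$. The matrix $[\,F\mid E\,]=\begin{bsmallmatrix} W_f & 0\\ 0 & W_e\end{bsmallmatrix}$ has rank $n_f+n_e=n$, because $W_f$ and $W_e$ have full row rank. By the converse statement after Eq.~\eqref{eq:kernel}, $\gD(\Omega)$ is a Dirac structure. Its first $n_f$ rows involve only flows and its last $n_e$ rows only efforts, so it is separable. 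The $n_f$ flow constraints $W_f f=0$ are independent.

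For part (ii), I would first derive a normal form for any separable Dirac structure. Write $\gD_\star=\{(f,e)\mid Af=0,\ Be=0\}$ with $A\in\R^{n_f\times n}$ of rank $n_f$. Because the constraints decouple, $\gD_\star=\ker A\times\ker B$. Condition (ii) of Definition~\ref{def:dirac} then gives $\dim\ker B=n-(n-n_f)=n_f$. Condition (i) says $e^\top f=0$ for all $f\in\ker A$ and $e\in\ker B$, i.e. $\ker B\subset(\ker A)^\perp=\operatorname{row}(A)$. Since both spaces have dimension $n_f$, they are equal. Hence, with $V:=\operatorname{row}(A)$,
\[
\gD_\star=\{(f,e)\mid f\in V^\perp,\ e\in V\}.
\]
So a separable Dirac structure with $n_f$ flow constraints is determined by an $n_f$-dimensional subspace $V$, and conversely. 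The same argument applied to $\gD(\Omega)$ shows that its subspace is the row space of $W_f$.

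It therefore suffices to find $\Omega$ with $\operatorname{row}(W_f)=V$. Let $Q=\exp(S(\Omega))$. The rows of $W_f$ are $x^\top W_0$ with $x^\top$ ranging over the first $n_f$ rows of $Q$. So I need the span $U_Q$ of the first $n_f$ rows of $Q$ to equal the $n_f$-dimensional subspace $U:=\{v^\top W_0^\top\mid v\in V\}$. This is the statement that the map $\Omega\mapsto U_Q$ onto the Grassmannian, based at $\operatorname{span}(e_1,\dots,e_{n_f})$, is surjective. I expect this to be the main step.

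To prove it, I would take an orthonormal basis of $U$ as the rows of $[\,U_1\mid U_2\,]$, with $U_1\in\R^{n_f\times n_f}$ and $U_2\in\R^{n_f\times n_e}$, and apply the CS decomposition. This gives $U_1=P\cos\Theta\,Q_1^\top$ and $U_2=P\,[\sin\Theta\ \ 0]\,Q_2^\top$, with $P,Q_1$ orthogonal of size $n_f$, $Q_2$ orthogonal of size $n_e$, and principal angles $\Theta$ in $[0,\pi/2]$. When $n_f>n_e$, the excess angles are zero and the padding goes on the other side. Set $\Omega=Q_1[\,\Theta\ \ 0\,]Q_2^\top$, which is $n_f\times n_e$. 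Using $S(\Omega)=\operatorname{diag}(Q_1,Q_2)\,S([\Theta\ 0])\,\operatorname{diag}(Q_1,Q_2)^\top$, the exponential is computed blockwise from the $2\times2$ rotations $\begin{bsmallmatrix}\cos\theta&\sin\theta\\-\sin\theta&\cos\theta\end{bsmallmatrix}$. Its first $n_f$ rows are $[\,Q_1\cos\Theta\,Q_1^\top\mid Q_1[\sin\Theta\ 0]Q_2^\top\,]$. Left-multiplying by the invertible matrix $PQ_1^\top$ yields exactly $[\,U_1\mid U_2\,]$, so $U_Q=U$.

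The degenerate cases $n_f=0$ and $n_e=0$ are trivial, since then $V$ is $\{0\}$ or $\R^n$ and $\Omega$ is empty. Care is needed with the padding of $\Theta$ when $n_f\neq n_e$. Alternatively, one could cite completeness of the Grassmannian, so that the exponential map from the horizontal space $\{S(\Omega)\}$ is onto by Hopf--Rinow; the explicit CS construction avoids invoking this.
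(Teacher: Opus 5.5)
Your proposal is correct and follows essentially the paper's route. Part (i) is the same orthogonality computation, and part (ii) likewise rests on a CS decomposition together with the conjugation identity $S(Q_1\Theta Q_2^\top)=\operatorname{diag}(Q_1,Q_2)\,S(\Theta)\,\operatorname{diag}(Q_1,Q_2)^\top$, after which the leftover orthogonal factor only recombines rows within a block. The difference is organizational: the paper orthonormalizes both blocks and applies the full CS decomposition to $WW_0^\top$, absorbing $\operatorname{diag}(U_fV_f^\top,U_eV_e^\top)$. Your preliminary lemma $\gD_\star=V^\perp\times V$ instead lets you match only the flow block with a thin CS decomposition, which makes the Grassmannian picture explicit.
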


\begin{proposition}[Energy balance]\label{prop:energy}
    For every $\Omega$, $\theta$, input $u$, and solution $(z(t), f(t), e(t))$ of the DAE of DINEs,
    \begin{equation}\textstyle
        \frac{\mathrm{d}}{\mathrm{d}t} H_\theta(z(t)) = \langle e_R(t) \mid f_R(t) \rangle + \langle e_P(t) \mid f_P(t) \rangle.
        \label{eq:energy-balance}
    \end{equation}
    This means that the total energy is changed outside the storage elements.
    If, in addition, a sign constraint $f_{R,j} \hat{\gR}_{\theta,j}(f_{R,j}) \le 0$ is imposed on resistive relations, $\frac{\mathrm{d}}{\mathrm{d}t} H_\theta(z(t)) \le \langle e_P(t) \mid f_P(t) \rangle$ holds, which means that the total energy is dissipated.
\end{proposition}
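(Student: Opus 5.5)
The plan is to combine the power-continuity of $\gD(\Omega)$ from Theorem~\ref{thm:dirac_parameterize}(i) with the energy-storage relations of the storage ports. First, I invoke Theorem~\ref{thm:dirac_parameterize}(i): for every $\Omega$, the pair $(F,E)$ satisfies Eq.~\eqref{eq:conditions}, so $\gD(\Omega)$ is a Dirac structure. By condition (i) of Definition~\ref{def:dirac}, every $(f,e)\in\gD(\Omega)$ then satisfies $\langle e\mid f\rangle=0$. If one prefers to see this directly from the kernel representation, separability makes it simple. Here $Ff+Ee=0$ means $W_f f=0$ and $W_e e=0$. Since $W(\Omega)$ is orthogonal, the row space of $W_e$ is the orthogonal complement of the row space of $W_f$. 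So $f\in\ker W_f=\operatorname{row}(W_e)$, and $e\perp\operatorname{row}(W_e)$, which gives $e^\top f=0$. A solution of the DAE satisfies $(f(t),e(t))\in\gD(\Omega)$ for all $t$ by Eq.~\eqref{eq:dirac-dae}, so $\langle e(t)\mid f(t)\rangle=0$ pointwise in time.

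Second, I split the total power by port type:
\[
0=\langle e_S\mid f_S\rangle+\langle e_R\mid f_R\rangle+\langle e_P\mid f_P\rangle .
\]
For each storage port, I use the energy-storage relations with $H_{\theta,i}$. For a C-type port, $f_i=-\dot z_i$ and $e_i=H_{\theta,i}'(z_i)$. For an I-type port, $e_i=-\dot z_i$ and $f_i=H_{\theta,i}'(z_i)$. In both cases $e_if_i=-H_{\theta,i}'(z_i)\dot z_i=-\frac{\mathrm d}{\mathrm dt}H_{\theta,i}(z_i(t))$ by the chain rule, which is valid because $H_{\theta,i}$ is differentiable and $z$ is differentiable along a solution. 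Summing over $i$ and using $H_\theta=\sum_i H_{\theta,i}$ gives $\langle e_S\mid f_S\rangle=-\frac{\mathrm d}{\mathrm dt}H_\theta(z(t))$. Substituting this into the split yields Eq.~\eqref{eq:energy-balance}.

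Third, for the dissipative statement, I use that the resistive relation is the explicit map $e_{R,j}=\hat\gR_{\theta,j}(f_{R,j})$. Then $\langle e_R\mid f_R\rangle=\sum_j f_{R,j}\hat\gR_{\theta,j}(f_{R,j})\le 0$ under the sign constraint, and the inequality follows at once.

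The only step needing care is the first one. It relies on Theorem~\ref{thm:dirac_parameterize}(i), or on the orthogonality argument above, to guarantee that power-continuity holds for every $\Omega$ and not only at a trained optimum. I also need to fix the regularity assumption that solutions have differentiable $z$. Everything else is bookkeeping of signs.
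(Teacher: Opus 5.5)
Your proposal is correct and follows the paper's proof essentially step for step: power continuity of $\gD(\Omega)$ via Theorem~\ref{thm:dirac_parameterize}(i), the split of $\langle e\mid f\rangle=0$ by port type, the energy-storage relations giving $\langle e_S\mid f_S\rangle=-\frac{\mathrm{d}}{\mathrm{d}t}H_\theta(z(t))$, and the sign constraint for the inequality. Your direct check that $W_f f=0$ and $W_e e=0$ force $e^\top f=0$ by the orthogonality of $W(\Omega)$ is a correct addition. The paper does not make this check itself and instead relies on the cited kernel-representation result.
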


See Appendix~\ref{app:proofs} for the proofs.
This parameterization requires $n_f$ as a hyperparameter, the number of independent constraints among the flows.
For a connected electrical network with one element per branch, it is the number of nodes minus one, since the row space of $W_f$ is spanned by Kirchhoff's current law at the nodes~\citep[Secs.~2.2.2 and~12.1]{vanderschaft2014port}.
A constant $W$ does not require the constraints to be linear in the state: the force balance of several nonlinear springs at a massless junction is linear in the efforts and is inside this setting.

\paragraph{Learning and prediction.}
\label{sec:learning}
Given a set of the trajectories of inputs $u(t)$ and observations $y(t_k)$ at sample times $t_k$ for $k=0,1,\dots$, we train DINEs to fit them.
The observations $y$ are a given subset of the port variables and states; unlike Eq.~\eqref{eq:reduction}, they need not be the variables conjugate to the inputs.
We prepare a neural network $\chi_\eta$ called trajectory network with parameters $\eta$ that takes the time $t$ and the index of the trajectories and returns, as continuous-time functions of $t$, the state $z(t)$ of every storage element, the flow $f_R$ of every resistive element, and the conjugate variable of every source port, whether observed or not.
The remaining port variables follow from the constitutive relations.
For storage elements, $\dot{z}_i$ and $\partial H_i/\partial z_i$ are computed by the automatic differentiation, and the effort of a resistive element follows from $e_{R,j}=\hat{\gR}_{\theta,j}(f_{R,j})$.
The observations $\hat{y}(t)$ are reconstructed from these states and port variables.
We suppose that the number of elements of each type (C, I, R, Sf, Se) is given, which fixes the numbers of ports of each type and hence $n$.

We train this neural network by minimizing the sum of the data-assimilation loss at the sample times $t_k$ and the DAE residual of Eq.~\eqref{eq:dirac-dae} at collocation points $s_j$:
\begin{equation}\textstyle
    \lambda \mathbb{E}_k \big\| \hat{y}(t_k) - y(t_k) \big\|^2
    + \mathbb{E}_j \big\| F f(s_j) + E e(s_j) \big\|^2,
    \label{eq:loss}
\end{equation}
for a weight $\lambda$.
Each term is averaged over a minibatch of trajectories, times $t_k$, and collocation points $s_j$, which are randomly sampled at every update, and the parameters $\theta$ of $H_\theta$ and $\gR_\theta$, the matrix $\Omega$, and the parameters $\eta$ of $\chi_\eta$ are optimized jointly by gradient descent.
$\chi_\eta$ takes the role of a numerical integrator during training, solving the learned DAE by minimizing its residual, and is discarded after training.
This approach is similar to physics-informed neural networks (PINNs)~\citep{raissi2019physics}, except that the equation itself is learned.
For prediction, initial values are estimated from the observations available at the start of the prediction so that they are consistent with the algebraic rows of the learned DAE, and the learned DAE in Eq.~\eqref{eq:dirac-dae} is integrated by the three-stage Radau IIA method of order five~\citep{hairer1996solving} with Newton's method~\citep{nocedal2006numerical}.
See Appendix~\ref{app:implementation} for the details of implementation.

\paragraph{Isolation and composition.}
\label{sec:composition}
The learned DINEs contain the constitutive relations of the elements and the Dirac structure $(F, E)$ separately, so subsystems can be isolated or composed by operations on their ports, without eliminating variables, deriving reduced equations, or retraining, up to scale calibrations.
A subpart of the system can be isolated by removing the constitutive relations of the elements outside it, while keeping their ports in $(F, E)$ as boundary ports, i.e., ports that are not connected to any element.
When considering boundary ports to be connected to non-prescribed sources (Sf or Se), the isolated system is again DINEs.
In particular, when the subpart is a single element, its own port becomes the boundary port, and the element is isolated with its learned constitutive relation alone and can be reused in another system.
Two isolated systems $S_A, S_B$ can be composed by adding a boundary port to each of their Dirac structures and connecting these ports to each other.
This process stacks their matrices $(F_A, E_A)$ and $(F_B, E_B)$ block-diagonally and adds two rows that impose $f_A + f_B = 0$ and $e_A - e_B = 0$ on the connected port variables, with signs depending on the orientation of the ports.
These rows correspond to $\gD_{int}$ in Fig.~\ref{fig:concept}.
The composed DAE has $n_A + n_B + 2$ constraints in the $n_A + n_B$ ports, which can be integrated directly.
Alternatively, the connected ports are eliminated algorithmically before integration, after which the composed DAE has again a separable and constant Dirac structure on the remaining ports and hence a DINE~\citep[Sec.~6]{vanderschaft2014port}.

\paragraph{Discussions for comparisons and limitations.}
\label{sec:discussion}
Thanks to the kernel representation and the trajectory network $\chi_\eta$, DINEs do not require reduction and admit partial observation.
In contrast, PoDiNNs require the input-output representation to exist, possibly after reduction, and every state to be observed.
OE-pHNNs and the pH-DAE identification suppose the observation to be the output conjugate to the input.
Our parameterization of $(F, E)$ requires $n_f$, which has to be swept in practice because the interconnection is unknown.
Nevertheless, a wrong model is not identified silently, because a misspecification of $n_f$ leads to a catastrophic failure; this is expected from the implication of Theorem~\ref{thm:dirac_parameterize} that a misspecified $n_f$ excludes the true Dirac structure from the model family.

Gyrators, which typically bridge physical domains as in DC motors and hydraulic motors, are outside our setting.
Constraints whose coefficients depend on the state, which lead to modulated Dirac structures as for a multi-link robot in Cartesian coordinates, and a kinetic energy that depends on the momenta and the positions together, which violates the form $H_\theta(z) = \sum_i H_{\theta,i}(z_i)$ as for the same robot in generalized coordinates, are also outside our setting.

When reading the learned interconnection, it is beneficial to transform $W_f$ and $W_e$ to reduced row echelon forms, so that each row states one constraint among a minimal set of ports.
Depending on the interconnection and on which elements are observed, the scales of port variables, constitutive relations, and states are not unique.
This gauge freedom is removed in most cases by normalizing the entries of the rows corresponding to junctions (but not to transformers) to $0$ or $\pm 1$; otherwise, a few additional measurements fix the remaining scales, before or after isolation and composition.

\section{Experiments and results}
\label{sec:experiments}
\subsection{Experimental settings}
\label{sec:settings}
\begin{figure}[t]
    \centering
    \small
    \setlength{\tabcolsep}{3pt}
    \begin{tabular}{ccc}
        \includegraphics[height=1.8cm]{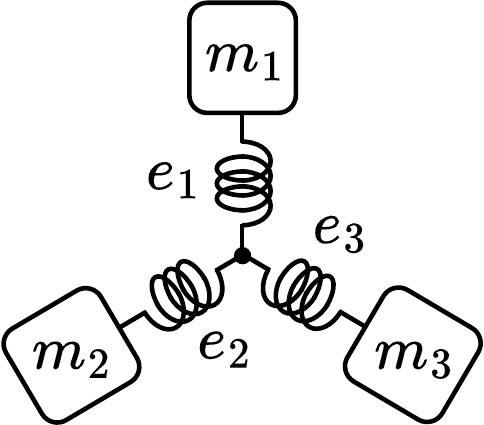} &
        \includegraphics[height=2.1cm]{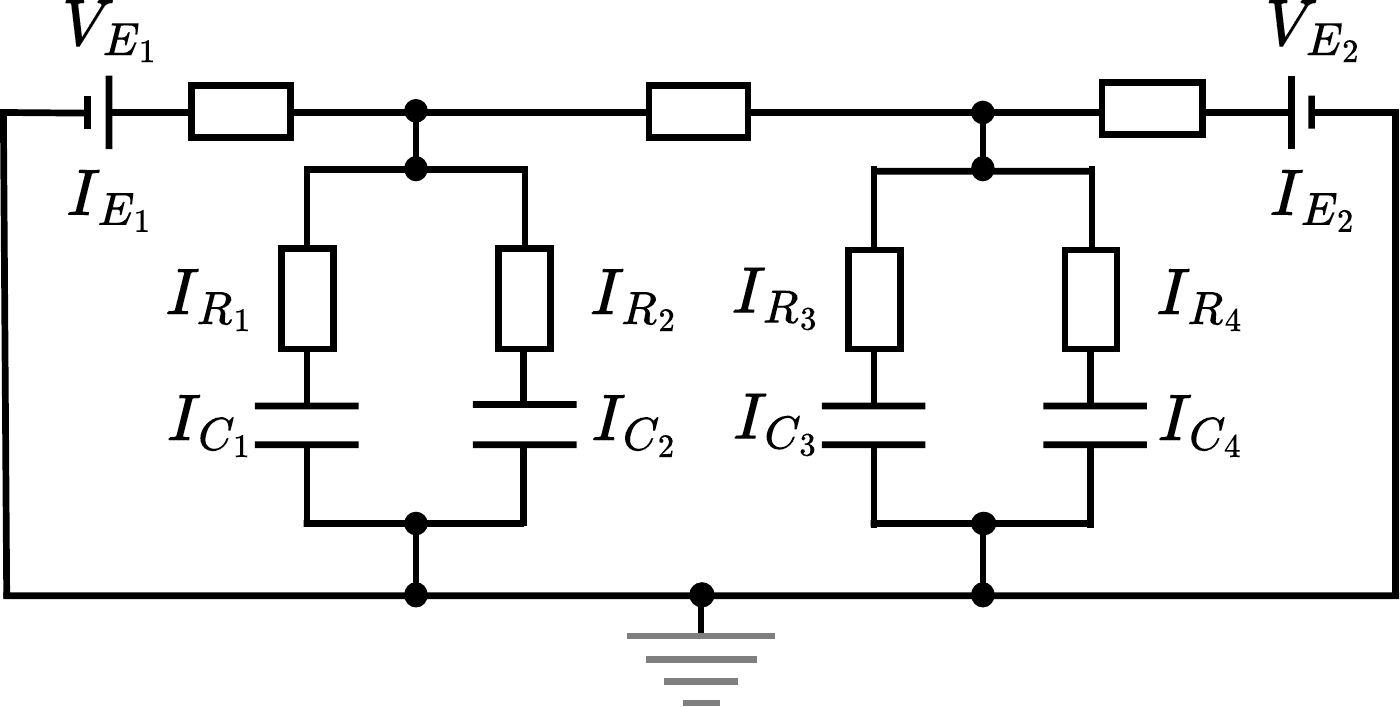}          &
        \includegraphics[height=1.4cm]{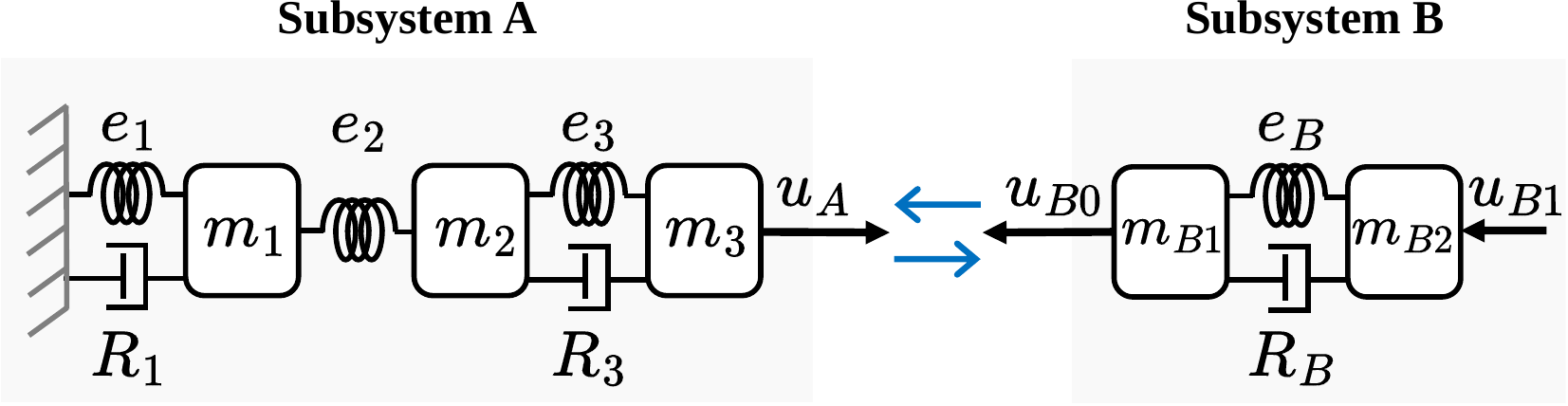}                \\
        \footnotesize (Task 1) Three-way junction                            &
        \footnotesize (Task 2) RC ladder                                     &
        \footnotesize (Task 3) Rigid coupling
    \end{tabular}\\[0.5em]
    \setlength{\tabcolsep}{6pt}
    \begin{tabular}{cc}
        \includegraphics[height=1.8cm]{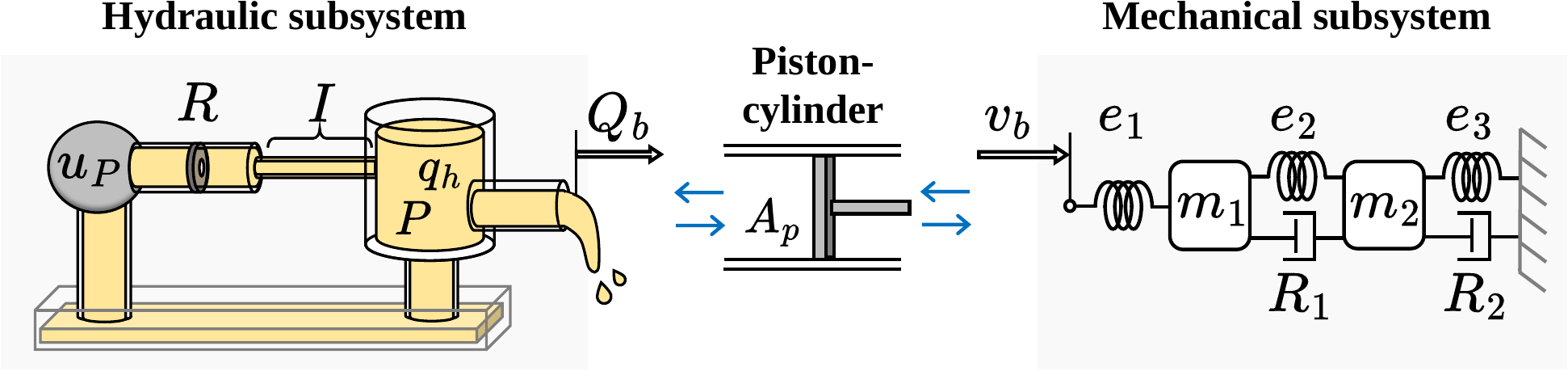} &
        \includegraphics[height=1.5cm]{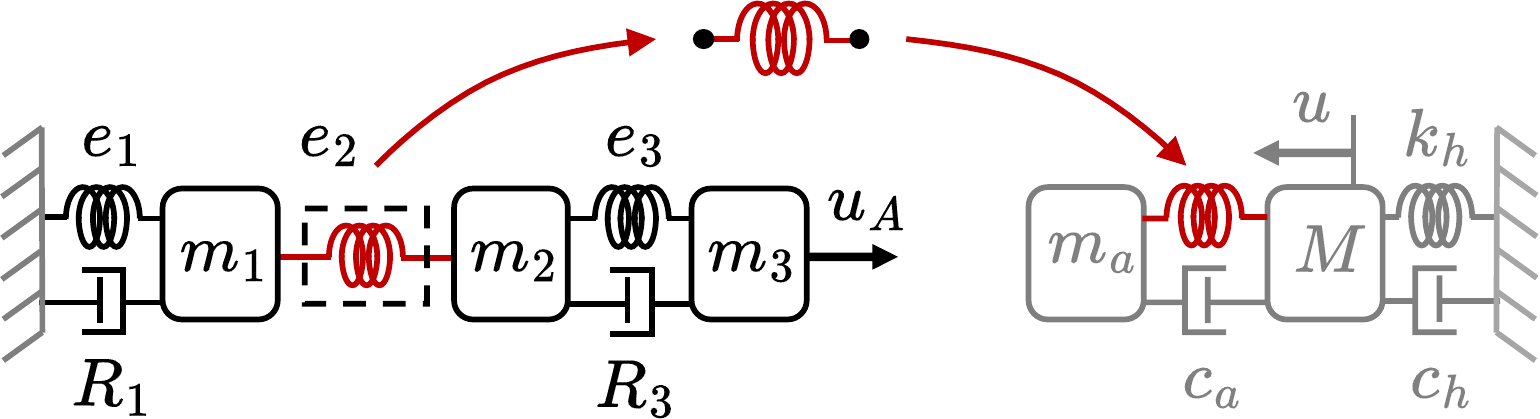}    \\
        \footnotesize (Task 4) Hydraulic--mechanical             &
        \footnotesize (Task 5) Spring reuse
    \end{tabular}
    \caption{Diagrams of the systems of Tasks 1--5.}
    \label{fig:systems}
\end{figure}

\textbf{Tasks.}
We evaluate DINEs on the five tasks shown in Figure~\ref{fig:systems}.
Appendix~\ref{app:experiments} gives their detailed settings, equations, parameters, and data generation.
In Task~1, a \emph{three-way junction}, we examine whether the constraints are learned.
The force balance at the junction leads to an algebraic constraint, which PoDiNNs cannot learn.
In Task~2, an \emph{RC ladder}, we examine whether the system can be learned from observed currents when the states (that is, capacitor charges) are unobserved.
Its node voltages are determined by nonlinear algebraic relations that arise from the constitutive relations of the resistors and Kirchhoff's laws.
In Tasks~3 and~4, we examine the composition of learned subsystems, and in Task~5 the isolation.
Each subsystem has one boundary port, which is connected to a source during learning, and then composed to another.
In Task~3, a \emph{rigid coupling}, the sources are external forces, applied to two masses.
In Task~4, a \emph{hydraulic--mechanical coupling}, the boundary inputs are the flow leaving the reservoir and the velocity imposed at the free end of the first spring.
Because of the multi-domain nature of the system, this composition is represented by a transformer, whose constant ratio (that is, the piston area) is unknown and identified from a few measurements of the composed system.
In Task~5, a \emph{spring reuse}, we examine whether the spring learned in a subsystem (i.e., a nonlinear spring) of Task 3 can be isolated and reused in a new mechanical system without retraining.

\textbf{Baselines and prior knowledge.}
None of the existing methods introduced in Section~\ref{sec:related} is applicable to Tasks~1--5 as it is, except Neural ODEs to Task~1.
On Task~1, the other methods reduce to Neural ODEs, require the constraints to be given in advance, or derive the state-dependent coefficient matrices.
We nevertheless evaluate PoDiNNs on Task~1 to confirm this limitation.
For Task~2, we extend the observation map of CT-SUBNET from $y = h_\theta(z)$ to $y = h_\theta(z, u)$ because the observed currents $y$ depend directly on the source voltages $u$.
CT-SUBNET is given its state dimension $n_z$, which is swept.
Recall that OE-pHNNs and the pH-DAE identification assume the observation to be conjugate to the input, and N-pHDAEs require the circuit topology to be known and all of its variables observable.
For Tasks~3 and~4, we construct Coupled Neural ODEs as a baseline without physical structure, where a Neural ODE learns each subsystem and a separate network learns the variable conjugate to the boundary input, and then these networks are composed.
For Task~4, the piston area is identified from a few observations of the composed system.
For DINEs, we assume that the I-type elements, the capacitors of Task~2, and the reservoir of Task~4 are linear with their coefficients learned.
For comparison, we also evaluate DINEs (soft), which replace the parameterization of Theorem~\ref{thm:dirac_parameterize} by penalty terms of the conditions of Eq.~\eqref{eq:conditions} in the loss (see Appendix~\ref{app:ablation} for details).
We summarize the prior knowledge and assumptions available for each model in Appendix~\ref{app:additional}.

\textbf{Training and prediction.}
DINEs are trained as stated above, whereas Neural ODEs, PoDiNNs, and the subsystem models of Coupled Neural ODEs are trained by one-step prediction with the classical fourth-order Runge--Kutta method, and CT-SUBNET by the multi-step prediction error over windows whose initial state is given by its encoder.
The learned DAEs are integrated by the Radau IIA method and the ODE models by an adaptive Runge--Kutta method.
On Tasks~3 and~4, the learned subsystems are composed without retraining: the scale of the boundary variable of each subsystem of DINEs is calibrated once from the learned interconnection or from boundary measurements in its training data (see Appendix~\ref{app:coupling}), the piston area of Task~4 is identified from a few states of the composed system, and the composed system is integrated as a DAE whose boundary variables are unknowns.

For prediction, DINEs estimate the initial state to be consistent with the observation at the first sample (or the history of the inputs and observations on Task~2) under the learned algebraic constraints.
Neural ODEs and PoDiNNs use the observed initial state.
CT-SUBNET estimates the initial state from the history using its encoder.
Coupled Neural ODEs do the same as DINEs.

\textbf{Evaluation metrics.}
Each model predicts the trajectory of $K$ steps of every test trajectory, and the squared error between the predicted and true observations, averaged over the $d$ observed channels, $e_k = \frac{1}{d} \| \hat{y}(t_k) - y(t_k) \|^2$, is computed at every sample $k = 0, \dots, K$.
The MSE is its average over $k$.
The valid prediction time (VPT)~\citep{vlachas2020backpropagation} is the duration until the step at which the error first exceeds a threshold $\varepsilon$, normalized by the length of the test trajectory.
Both metrics are averaged over the ten test trajectories for each seed, and we report the mean and the standard deviation over the five seeds.
See Appendix~\ref{app:common} for the detailed definitions.

\subsection{Results}
\label{sec:results}
\begin{table}[t]
    \centering
    \caption{Experimental results.}
    \label{tab:results}
    \scriptsize
    \setlength{\tabcolsep}{3pt}
    \begin{tabular}[t]{l cc}
        \toprule
                                    & \multicolumn{2}{c}{\textbf{(Task 1) Three-way junction}}                                                     \\
        \cmidrule(lr){2-3}
        \textbf{Model}              & \textbf{MSE}$\downarrow$                                                        & \textbf{VPT}$\uparrow$     \\
        \midrule
        Neural ODEs                 & $(2.871\sd{1.014}) \times 10^{4}$                                               & $0.062\sd{0.015}$          \\
        PoDiNNs$^*$                 & $(6.129\sd{0.089}) \times 10^{4}$                                               & $0.000\sd{0.000}$          \\
        \midrule
        DINEs (soft)$^\dagger$      & \multicolumn{2}{c}{(not converged in 3 of 5 seeds)}       \\
        DINEs (ours)                & $\mathbf{3.319}\sd{1.221}$                                                      & $\mathbf{0.860}\sd{0.043}$ \\
        \midrule
        DINEs ($n_f = 1$)$^\ddagger$ & \multicolumn{2}{c}{(not converged in all seeds)}                                                                          \\
        DINEs ($n_f = 3$)$^\ddagger$ & $(4.975\sd{0.062}) \times 10^{4}$                                               & $0.000\sd{0.000}$          \\
        DINEs ($n_f = 4$)$^\ddagger$ & $(6.697\sd{0.912}) \times 10^{4}$                                               & $0.000\sd{0.000}$          \\
        DINEs ($n_f = 5$)$^\ddagger$ & $(6.347\sd{0.058}) \times 10^{4}$                                               & $0.000\sd{0.000}$          \\
        \bottomrule\addlinespace[1pt]
                                    & $\times 10^{-5}$                                                                & $\varepsilon = 10^{-4}$    \\
    \end{tabular}\qquad
    \begin{tabular}[t]{l cc}
        \toprule
                               & \multicolumn{2}{c}{\textbf{(Task 2) RC ladder}}                                              \\
        \cmidrule(lr){2-3}
        \textbf{Model}         & \textbf{MSE}$\downarrow$                                        & \textbf{VPT}$\uparrow$     \\
        \midrule
        CT-SUBNET ($n_z = 4$)  & $11.248\sd{2.962}$                                              & $0.039\sd{0.011}$          \\
        CT-SUBNET ($n_z = 8$)  & $3.300\sd{0.899}$                                               & $0.096\sd{0.084}$          \\
        CT-SUBNET ($n_z = 12$) & $3.267\sd{0.869}$                                               & $0.157\sd{0.023}$          \\
        CT-SUBNET ($n_z = 16$) & $3.683\sd{1.077}$                                               & $0.205\sd{0.055}$          \\
        CT-SUBNET ($n_z = 20$) & $3.864\sd{1.846}$                                               & $0.241\sd{0.018}$          \\
        \midrule
        DINEs (soft)           & \multicolumn{2}{c}{(not converged in all seeds)}         \\
        DINEs (ours)           & $\mathbf{1.003}\sd{0.113}$                                      & $\mathbf{0.711}\sd{0.002}$ \\
        \bottomrule\addlinespace[1pt]
                               & $\times 10^{-5}$                                                & $\varepsilon = 10^{-4}$    \\
    \end{tabular}\\[1ex]
    \begin{tabular}{l cc cc cc}
        \toprule
                            & \multicolumn{2}{c}{\textbf{(Task 3) Rigid coupling}}   & \multicolumn{2}{c}{\textbf{(Task 4) Hydraulic--mechanical}} & \multicolumn{2}{c}{\textbf{(Task 5) Spring reuse}}                                                                                                                                                              \\
        \cmidrule(lr){2-3} \cmidrule(lr){4-5} \cmidrule(lr){6-7}
        \textbf{Model}      & \textbf{MSE}$\downarrow$                               & \textbf{VPT}$\uparrow$                                      & \textbf{MSE}$\downarrow$                               & \textbf{VPT}$\uparrow$                                 & \textbf{MSE}$\downarrow$                      & \textbf{VPT}$\uparrow$                        \\
        \midrule
        Coupled Neural ODEs & $189.508\sd{101.995}$                                  & $0.002\sd{0.001}$                                           & $73.893\sd{10.636}$                                    & $0.004\sd{0.001}$                                                                                                                                      \\
        \midrule
        DINEs (ours)        & $\mathbf{1.206}\sd{0.492}$                             & $\mathbf{0.970}\sd{0.068}$                                  & $\mathbf{1.068}\sd{0.837}$                             & $\mathbf{0.831}\sd{0.321}$                             & $6.963\sd{4.625}$                             & $0.968\sd{0.050}$ \\
        \bottomrule\addlinespace[1pt]
                            & $\times 10^{-7}$                                       & $\varepsilon = 10^{-6}$                                     & $\times 10^{-7}$                                       & $\varepsilon = 10^{-6}$                                & $\times 10^{-8}$                              & $\varepsilon = 10^{-6}$                       \\
    \end{tabular}    \\[0.5ex]
    \raggedright\scriptsize Means $\pm$ standard deviations over five trials.
    The scale of the MSE and the threshold $\varepsilon$ of the VPT are given in the last row of each block.
    $^*$~For reference only, since PoDiNNs are theoretically inapplicable to Task~1.
    $^\dagger$~3 of 5 seeds did not converge.
    $^\ddagger$~$n_f$ is misspecified.

\end{table}

\begin{figure}[t]
    \centering
    \small
    \begin{subfigure}[t]{0.19\linewidth}
        \centering
        \includegraphics[width=\linewidth]{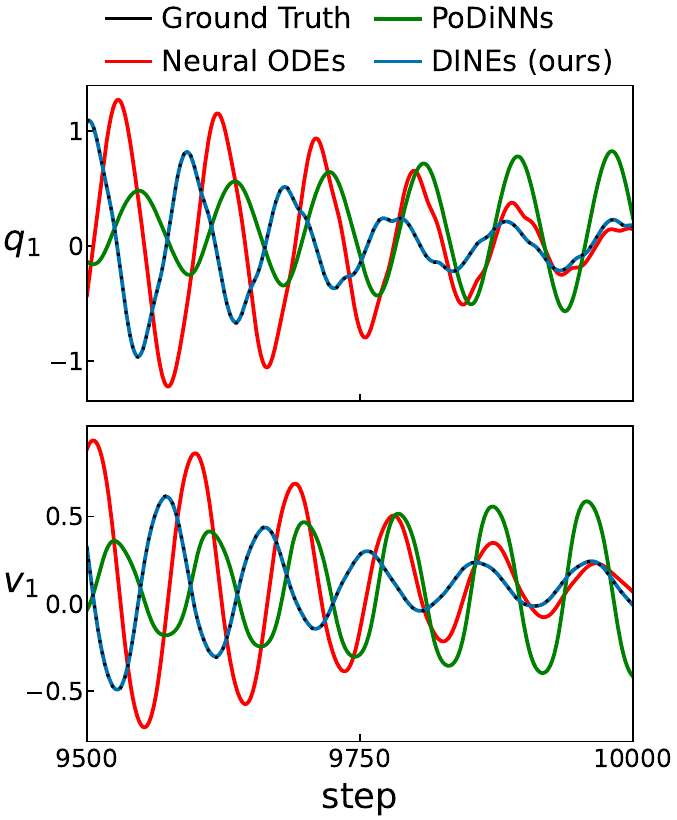}
        (Task~1) three-way junction.
        \label{fig:results:three-way}
    \end{subfigure}%
    \begin{subfigure}[t]{0.405\linewidth}
        \centering
        \includegraphics[width=\linewidth]{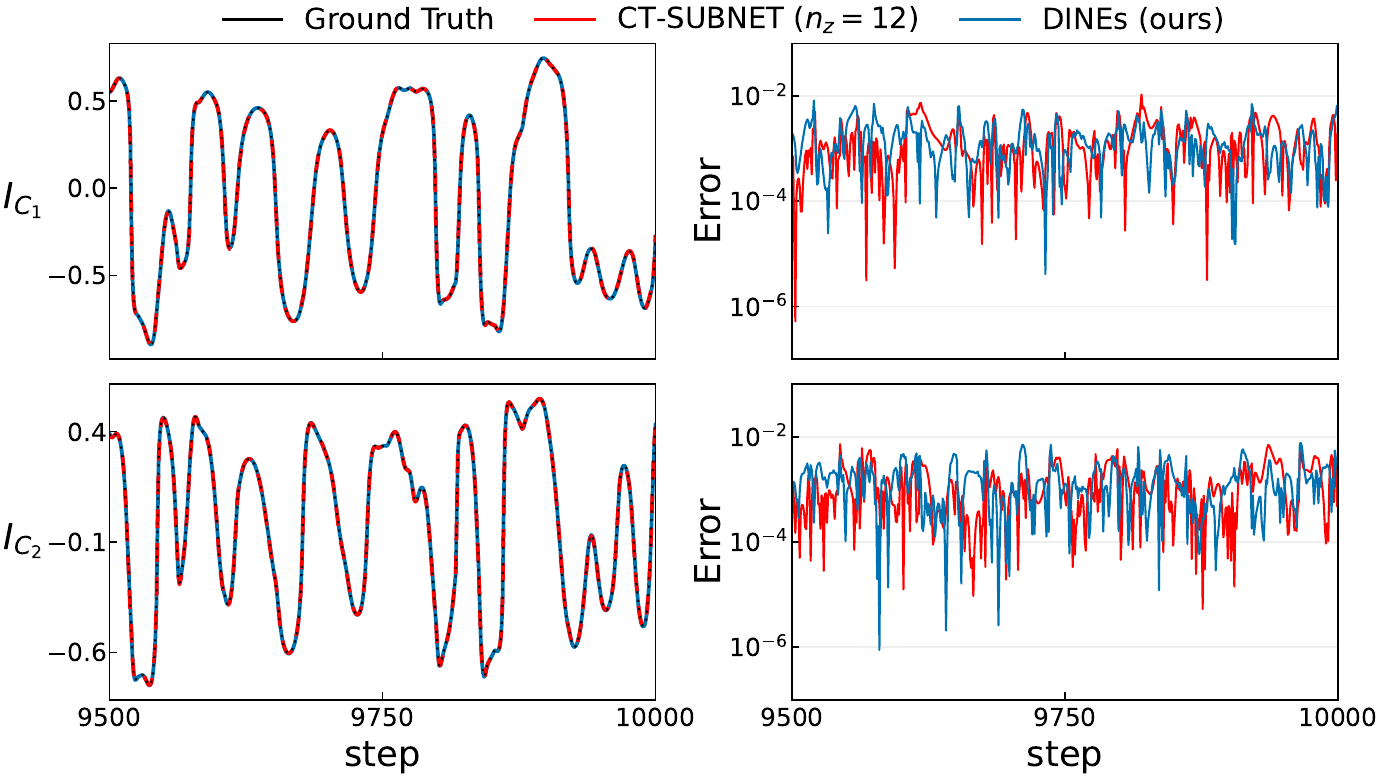}
        (Task~2) RC ladder ($n_z=12$).
        \label{fig:results:rc12}
    \end{subfigure}%
    \begin{subfigure}[t]{0.405\linewidth}
        \centering
        \includegraphics[width=\linewidth]{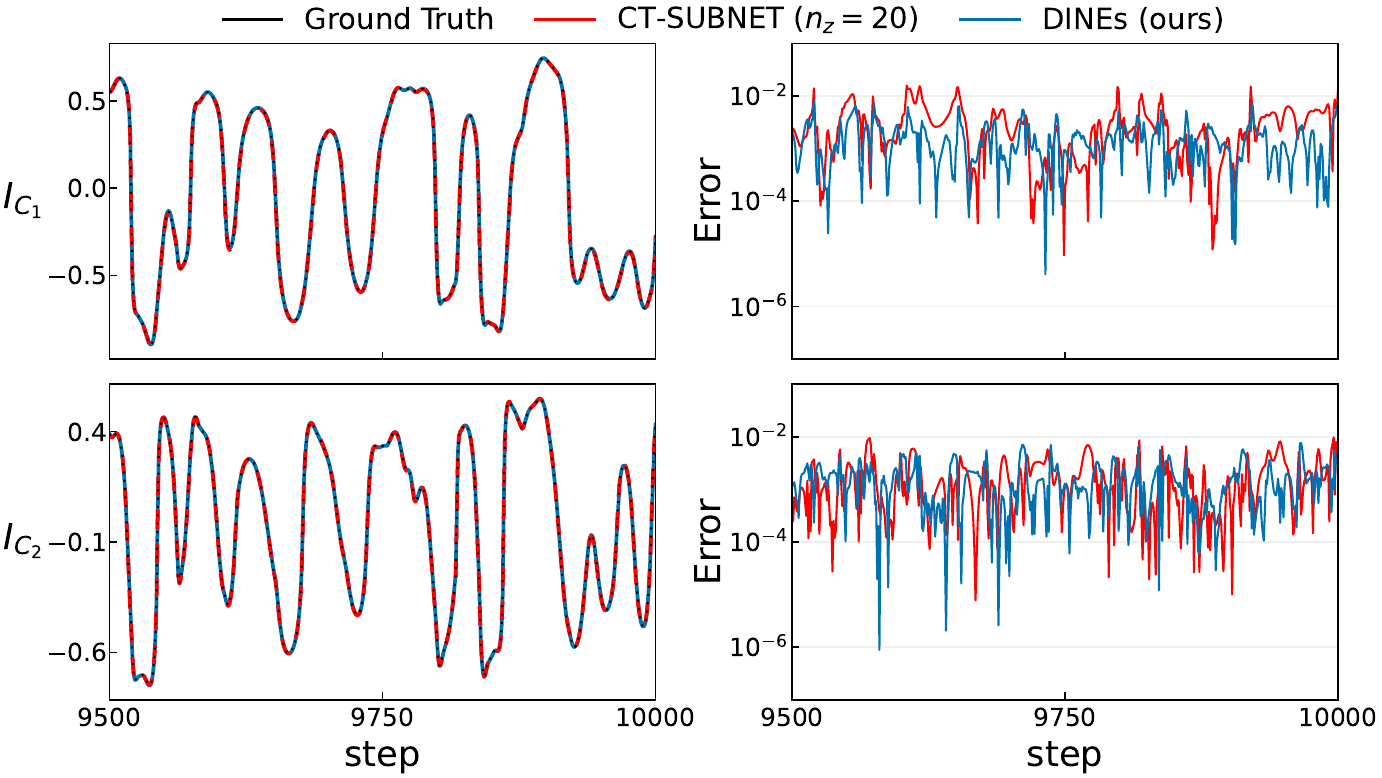}
        (Task~2) RC ladder ($n_z=20$).
        \label{fig:results:rc20}
    \end{subfigure}\\
    \begin{subfigure}[t]{0.406\linewidth}
        \centering
        \includegraphics[width=\linewidth]{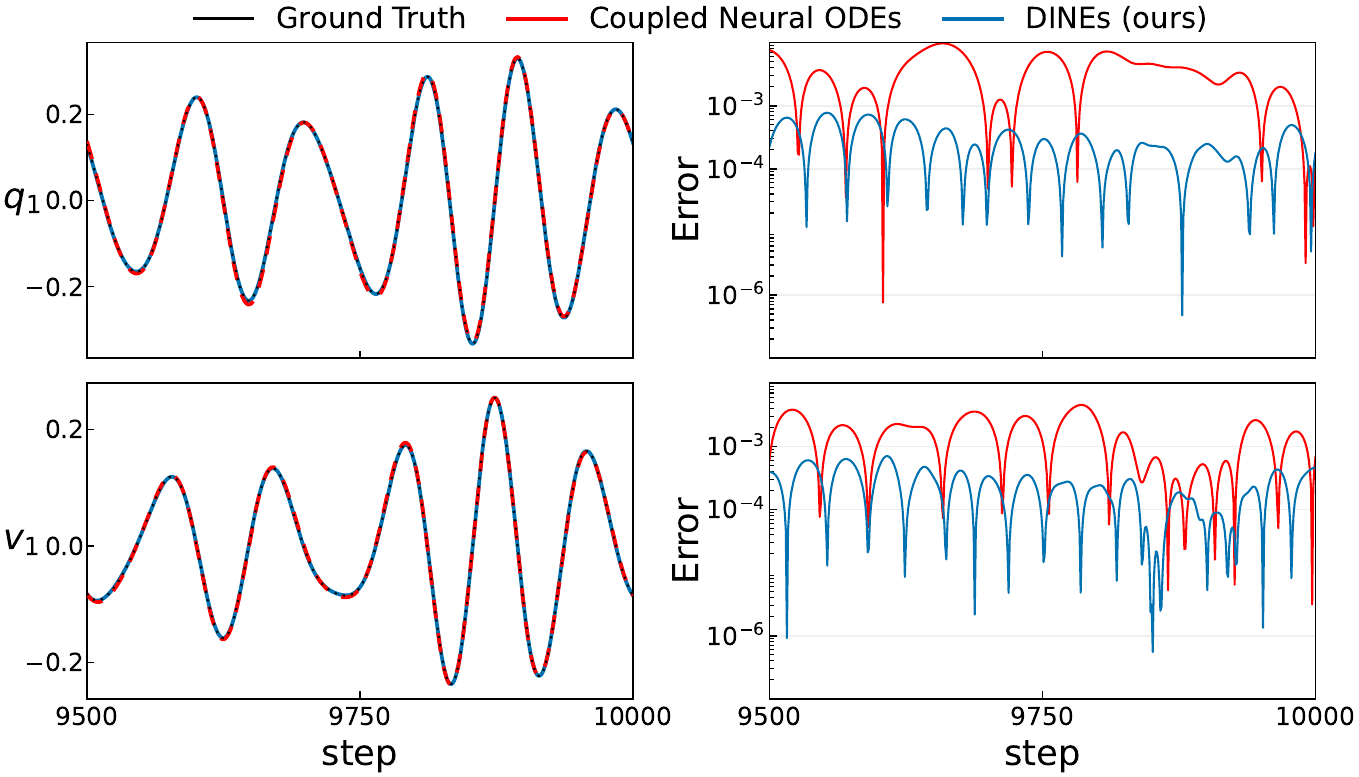}
        (Task~3) rigid coupling.
        \label{fig:results:rigid}
    \end{subfigure}%
    \begin{subfigure}[t]{0.394\linewidth}
        \centering
        \includegraphics[width=\linewidth]{result/piston.pdf}
        (Task~4) hydraulic--mechanical coupling.
        \label{fig:results:piston}
    \end{subfigure}%
    \begin{subfigure}[t]{0.20\linewidth}
        \centering
        \includegraphics[width=\linewidth]{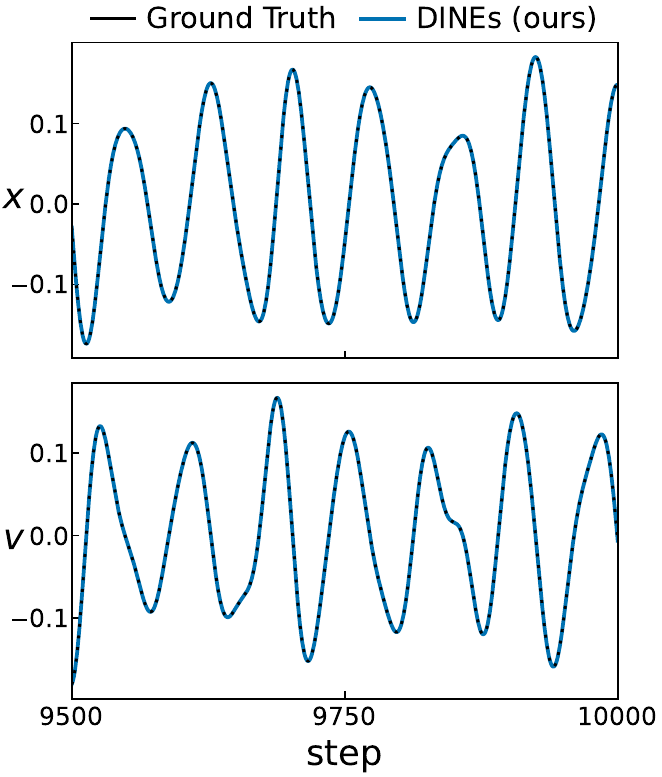}
        (Task~5) reuse.
        \label{fig:results:reuse}
    \end{subfigure}
    \caption{Visualizations of example predictions. Error denotes the absolute error.}
    \label{fig:results}
\end{figure}

\textbf{Results and visualizations.}
Table~\ref{tab:results} summarizes the numerical performances, and Figure~\ref{fig:results} visualizes example predictions.
DINEs achieved the lowest MSEs and the highest VPTs in all Tasks~1--4, and also showed high prediction accuracy on Task~5.

Neural ODEs can represent the system in Task 1 as a reduced ODE in principle, but their prediction errors grew quickly.
This is because the training trajectories lie on the manifold defined by the constraints of the interconnection, and a prediction becomes out of distribution once it leaves the manifold.
DINEs learn the interconnection and thus suppress such errors.
PoDiNNs cannot learn this kind of constraint and also perform poorly.
A variant with a state-dependent matrix $B_\theta(q)$ is not evaluated here as not examined in the original paper.
On Task~2, we compared CT-SUBNET with the latent state dimension $n_z$ ranging from the true number $4$ to a redundant $20$.
The MSE improved from $n_z = 4$ to $n_z = 8$ and deteriorated slightly beyond $n_z = 12$, whereas the VPT kept improving up to $n_z = 20$ with decreasing gains.
None of these settings reaches the level of DINEs.

On Tasks~3 and~4, the predictions of Coupled Neural ODEs look reasonable in Figure~\ref{fig:results}, but DINEs achieved MSEs nearly two orders of magnitude smaller and much longer VPTs.
Coupled Neural ODEs can be composed only when they learn the relation between the boundary port variables given as inputs and their conjugate variables as outputs.
In contrast, DINEs have an advantage that the Dirac structure provides such relation up to the gauge freedom.

Task~5 has no baseline because DINEs are the only method applicable to it.
DINEs achieved the MSE and VPT comparable to (or better than) those on the other tasks.
Although this is partly because every component other than the spring reused from Task~3 is known in this setting, the result shows that the isolation and the composition were performed with high accuracy.

\textbf{Necessity of hard constraints.}
For DINEs (soft), which impose only soft constraints, the Newton method for initialization or prediction did not converge in most seeds on Tasks~1 and~2.
This is presumably because a solution of the DAE may not exist when the constraints are only soft.
This result shows the necessity of the parameterization derived in Theorem~\ref{thm:dirac_parameterize}.

\textbf{Impact of the number of constraints.}
DINEs require the number $n_f$ of flow constraints, which is unknown in general.
To investigate the impact of its misspecification, we examine $n_f = 1, \ldots, 5$ on Task~1.
With $n_f = 1$, the Newton method for prediction did not converge.
Note that this problem does not arise in training, where the trajectory network is used.
With $n_f = 3, 4, 5$, the Newton method converged, but the MSEs and VPTs were catastrophic.
Since the number of effort constraints is $n_e = n - n_f$, $n_f$ determines not the total number of constraints but their allocation between flows and efforts.
Because the MSE and VPT differ so clearly, $n_f$ can be tuned easily by holding out a small validation set from the training data, and a wrong model is not identified silently.

\textbf{Learned interconnection.}
We confirmed that the matrices $F$ and $E$ learned the interconnection correctly in Tasks 1--4.
Note that, since the Dirac structure depends only on the row spaces of $F$ and $E$, the learned rows are not unique, and a readout procedure is needed for interpretation.
See Appendix~\ref{app:additional} for examples of the readout of the learned interconnection.

\section{Conclusion}
\label{sec:conclusion}
We proposed DINEs, which learn a physical system as a DAE whose algebraic constraints are given by a separable Dirac structure parameterized exactly by an orthogonal matrix, without reducing the system to an ODE.
Our experiments showed that DINEs learn constraints and unobserved states that existing methods cannot, and that learned subsystems and elements are composed and reused across physical domains without retraining.
The automatic determination of $n_f$ and the extension to non-separable Dirac structures are left for future work.

\subsection*{AI use statement}
We used large language models to translate, edit, and proofread the manuscript text and to check LaTeX formatting and bibliographic details.
We also used these models to survey related work.
All AI-assisted text was reviewed and revised by the authors, and all derivations were checked manually by the authors.
We take responsibility for the final content of this work, including text, claims, and artifacts produced with the aid of generative AI.

\subsection*{Ethics statement}
This work concerns the data-driven modeling of physical systems and uses only simulated data generated from known equations of motion.
It involves no human subjects, personal data, or data releases with privacy implications, and we do not foresee direct harmful applications.

\subsection*{Reproducibility statement}
The construction of the interconnection matrices $F$ and $E$ and the training loss, Eq.~\eqref{eq:loss}, are specified in Section~\ref{sec:method}, and the derivation of the reduced form used for comparison is given in Appendix~\ref{app:reduction}.
Appendix~\ref{app:implementation} provides the implementation of DINEs and its ablated variants, and Appendix~\ref{app:experiments} shows the equations of motion, parameters, observed variables, sampling settings, and optimizer settings of Tasks 1--5, together with the architectures and training of the baselines.
Source code and the data-generation scripts will be made public after refactoring.

\bibliography{arxiv}
\bibliographystyle{arxiv}

\clearpage
\appendix
\section{Elements and junction structure of port-based modeling}
\label{app:domains}
This appendix supplements Section~\ref{sec:port}.
Table~\ref{tab:domains} summarizes the power-conjugate variables, the state variables, and the elements of port-based modeling in the mechanical, electromagnetic, and hydraulic domains.
For example, the resistive relation of a linear damper or resistor is $e_R = -\hat{R} f_R$ with a positive coefficient $\hat{R}$.
A source (Sf or Se) supplies energy from, or drains it to, the outside of the system by prescribing a flow (e.g., a current source) or an effort (e.g., a voltage source or an external force) as a function of time, independently of the conjugate variable.
In this paper, each of the storage elements, resistive elements, and sources is a one-port, that is, it has a single pair of port variables $(f, e)$, and energy is stored, dissipated, or supplied only inside these elements.
A storage element whose energy depends on several states, such as the inertia of a multi-link robot whose kinetic energy depends on the momenta and the positions together, is a multiport and is outside our setting.
The junction structure is an assemblage of the following four types of elements~\citep{karnopp2012system}.
A 0-junction imposes a common effort with balanced flows, and a 1-junction imposes a common flow with balanced efforts, which generalize Kirchhoff's current and voltage laws, respectively.
In a mechanical system, a spring between two masses applies a common force to both and is connected to them by a 0-junction, whereas the springs and dampers that connect a mass to a fixed wall have a common velocity and are connected by a 1-junction.
A transformer (TF) is a two-port element that relates the efforts of its two ports, and likewise the flows, by a common ratio $n$, that is, $f_2 = n f_1$ and $e_1 = -n e_2$, as in a lever, a gear box, or a piston--cylinder pair.
A gyrator (GY) is a two-port element that relates the effort of each port to the flow of the other by a common ratio $r$, that is, $e_1 = -r f_2$ and $e_2 = r f_1$, as in a DC motor pump.
The two-ports are power continuous as $e_1 f_1 + e_2 f_2 = 0$ holds, where the power at both ports is counted as flowing into the element.
\begin{table}[h]
    \centering
    \caption{Power-conjugate variables, state variables, and elements in each domain}
    \label{tab:domains}
    \small
    \setlength{\tabcolsep}{4pt}
    \begin{tabular}{l l l l l}
        \toprule
            & Bond graph              & Mechanical      & Electromagnetic & Hydraulic          \\
        \midrule
        $f$ & flow                    & velocity        & current         & volume flow        \\
        $e$ & effort                  & force           & voltage         & pressure           \\
        $q$ & time integral of flow   & displacement    & charge          & volume             \\
        $p$ & time integral of effort & momentum        & flux linkage    & flow tube momentum \\
        \midrule
        C   & C-type storage element  & spring          & capacitor       & reservoir          \\
        I   & I-type storage element  & mass            & inductor        & fluid inertia      \\
        R   & resistive element       & damper          & resistor        & fluid resistor     \\
        Sf  & flow source             & moving boundary & current source  & flow rate source   \\
        Se  & effort source           & external force  & voltage source  & pressure source    \\
        \bottomrule
    \end{tabular}\\
    \raggedright Note that the flow or the effort of a port can be the negative of the listed quantity as $\langle e \mid f \rangle$ is the power flowing out of the element.
    For example, the effort of a mass is the negative of the force acting on it.
\end{table}

\section{Reduction of \texorpdfstring{Eq.~\eqref{eq:dirac-dae}}{the Dirac DAE} to the input-state-output form}
\label{app:reduction}
This appendix supplements Section~\ref{sec:background}: it derives the input-output representation and Eq.~\eqref{eq:reduction} from Eq.~\eqref{eq:dirac-dae}, following \citet[Secs.~4 and 5]{vanderschaft2014port}, gives the correspondence of existing methods to Eq.~\eqref{eq:reduction}, and states the two costs of reduction in equations.

\textbf{Exchange of port variables.}
Exchanging the flow and the effort of a port exchanges the corresponding columns of $F$ and $E$, which preserves the condition $E F^\top + F E^\top = 0$ and the rank condition~\citep[Sec.~5.3]{vanderschaft2014port}.
We exchange them for every I-type port, every Sf port, and a subset of the resistive ports, and denote the resulting matrices, flows, and efforts with $\tilde{ }$.
Every storage port then satisfies $\tilde{f}_S = -\dot{z}$ and $\tilde{e}_S = \nabla H(z)$, and every source port satisfies $\tilde{e}_P = u$, where $\nabla H(z) \in \R^{n_S}$ is the vector of the partial derivatives $\partial H / \partial z_i$.
Moreover, $\tilde{E} \tilde{F}^\top + \tilde{F} \tilde{E}^\top = 0$ and $\operatorname{rank} [\, \tilde{F} \mid \tilde{E} \,] = n$ hold.

\textbf{Input-output representation.}
If $\tilde{F}$ is invertible for the chosen subset of the resistive ports, Eq.~\eqref{eq:kernel} is equivalent to $\tilde{f} = B \tilde{e}$ with $B := -\tilde{F}^{-1} \tilde{E}$, that is, $\gD$ is the graph of $B$ in these variables.
This is the special case without constraints of the constrained input-output representation of \citet{dalsmo1998representations} and \citet[Sec.~5.2]{vanderschaft2014port}.
Here, $B$ is skew-symmetric because multiplying $\tilde{E} \tilde{F}^\top + \tilde{F} \tilde{E}^\top = 0$ by $\tilde{F}^{-1}$ from the left and by $\tilde{F}^{-\top}$ from the right gives $B + B^\top = 0$.
Since $\tilde{f}_S = -\dot{z}$, Eq.~\eqref{eq:dirac-dae} becomes an ODE in $z$ once the remaining port variables are eliminated through the constitutive relations, which is possible, for example, when $B_{RR} = 0$ and the resistive relations are explicit maps in these variables; this is the form that PoDiNNs employ~\citep{podinn}.

\textbf{Input-state-output form.}
We can decompose $B$ into submatrices corresponding to the storage, resistive, and source ports, and write $J := -B_{SS}$, $G_R := -B_{SR}$, and $G := -B_{SP}$.
Suppose, as is typical, that $B_{RR}$, $B_{RP}$, and $B_{PP}$ vanish, that is, no resistive or source port is connected directly to another resistive or source port.
Also, suppose that the resistive relation is linear, $\tilde{e}_R = -\hat{R} \tilde{f}_R$ with $\hat{R} = \hat{R}^\top \succeq 0$, where the coefficient of an exchanged resistive port is the inverse of that in the original variables.
The storage rows of $\tilde{f} = B \tilde{e}$ give $\dot{z} = -\tilde{f}_S = J\, \nabla H(z) + G_R \tilde{e}_R + G u$, the resistive rows give $\tilde{f}_R = B_{RS}\, \nabla H(z) = G_R^\top\, \nabla H(z)$ by the skew-symmetry of $B$, and the source rows give $\tilde{f}_P = B_{PS}\, \nabla H(z) = G^\top\, \nabla H(z)$.
Substituting $\tilde{e}_R = -\hat{R} \tilde{f}_R = -\hat{R} G_R^\top\, \nabla H(z)$ into the storage rows and setting $y := \tilde{f}_P$ yields Eq.~\eqref{eq:reduction} with $R = G_R \hat{R} G_R^\top$.
If a resistive or source port were connected directly to another resistive or source port, $B_{RP}$ or $B_{PP}$ would not vanish, and the output $y$ would acquire a feedthrough term in $u$~\citep[Sec.~4.2]{vanderschaft2014port}.

\textbf{Existing methods as special cases of Eq.~\eqref{eq:reduction}.}
HNNs~\citep{greydanus2019hamiltonian} use $\dot{z} = J_c\, \nabla H_\theta(z)$ with a pair of positions and momenta as the state $z = (q, p) \in \R^{2d}$ and the standard symplectic matrix $J_c=\left[\begin{smallmatrix} 0 & I_d \\ -I_d & 0 \end{smallmatrix}\right]$, that is, Eq.~\eqref{eq:reduction} with $R = 0$, $G = 0$, and $J=J_c$; due to the skew-symmetry of $J_c$, they conserve the energy $H_\theta$ exactly along the exact solutions.
Neural symplectic forms~\citep{chen2021neural} further learn $J=J_\theta(z)$ as the inverse of a learned symplectic form.
SymODENs~\citep{zhong2020symplectic} learn $H_\theta$ and $G_\theta(q)$ with $J = J_c$ and $R = 0$, and Dissipative SymODENs~\citep{zhong2020dissipative} add a learned dissipation matrix $R_\theta(q) = \Gamma_\theta(q) \Gamma_\theta(q)^\top \succeq 0$.
pHNNs~\citep{desai2021port} and pseudo-Hamiltonian neural networks~\citep{eidnes2023pseudo} employ a predetermined $J$ and learn $H_\theta$ and $R_\theta$ (typically, its diagonal elements corresponding to the damping coefficients of the resistive elements), and learn an external force in place of $G u$, which depends only on time or state.
OE-pHNNs~\citep{moradi2026port} learn $J_\theta$, $R_\theta$, $G_\theta$, and $H_\theta$ from noisy observations $y$.
Lagrangian neural networks (LNNs) and deep Lagrangian networks (DeLaN)~\citep{lutter2019deep,cranmer2020lagrangian} are based on Lagrangian mechanics, but can be considered as Hamiltonian systems with tangent bundles, where $H$ and $J$ are derived from the learned Lagrangian.
CHNNs and CLNNs~\citep{finzi2020simplifying} add holonomic constraints given in advance by projecting the vector field onto the constraint manifold.

\textbf{Further details of related methods.}
N-pHDAEs~\citep{nphdae} rewrite the DAE in semi-explicit form by a QR decomposition of the matrices, reduce it to an ODE by index reduction, and integrate it with a Runge--Kutta method, whereas the pH-DAE identification~\citep{phdae} avoids reduction and solves the DAE by the backward Euler method; the latter suggests that the coefficient matrices can be learned but learns only the parameters of the elements in the experiments.
Latent ODEs~\citep{rubanova2019latent} encode the observed window into a distribution over the initial latent state $z_0$, predict the future trajectory, and decode each observation through a learned likelihood of $y$ given $z$.

\textbf{Cost 1: the interconnection is mixed with the elements.}
In Eq.~\eqref{eq:reduction}, $J$ and $G$ are blocks of the interconnection, whereas $R = G_R \hat{R} G_R^\top$ combines the constitutive relation $\hat{R}$ of the resistive elements with the blocks $G_R$ that connect them; a learned $R$ therefore cannot be decomposed uniquely into $\hat{R}$ and $G_R$.
For a nonlinear explicit relation $\tilde{e}_R = -\gR(\tilde{f}_R)$, the same substitution gives $\dot{z} = J\, \nabla H(z) - G_R\, \gR\big(G_R^\top\, \nabla H(z)\big) + G u$, so that the dissipative term depends on the state and no longer factors as a constant matrix times $\nabla H(z)$.

\textbf{Cost 2: an ODE is assumed to exist.}
When no choice of the exchanged ports makes $\tilde{F}$ invertible, e.g., for the balanced forces $\sum_i \partial H / \partial q_i = 0$ of multiple nonlinear springs connected to a massless point, an ODE in $z$ exists only after part of the state has been eliminated manually or $J$ is replaced with a rank-deficient matrix that is no longer determined by the interconnection alone but depends on the state and on the constitutive relations of the elements (e.g., the spring coefficients).
Task~1 in Section~\ref{sec:experiments} and in Appendix~\ref{app:task1} is an instance.
Likewise, for a capacitor connected in parallel with a voltage source, the effort constraint $\partial H / \partial q = u$ has no flow part, so that $\tilde{F}$ is singular and the state $q$ is determined by the input instead of by an ODE.

\section{Proofs}
\label{app:proofs}
\begin{proof}[Proof of Theorem~\ref{thm:dirac_parameterize}]
    The theorem states that a subspace $\gD \subset \gF \times \gE$ is a separable Dirac structure with exactly $n_f$ independent flow constraints if and only if $\gD = \gD(\Omega)$ for some $\Omega \in \R^{n_f \times n_e}$; part (i) is the sufficiency of this representation and part (ii) its necessity.

    \smallskip\noindent
    \textbf{Sufficiency (i).}
    Since $S(\Omega)^\top = -S(\Omega)$, $W(\Omega) = \exp\big(S(\Omega)\big)\, W_0$ is orthogonal.
    Hence, the rows of $W_f$ and $W_e$ are mutually orthogonal, so $W_eW_f^\top=0$.
    Moreover, as the rows within each block are orthonormal, $W_f$ and $W_e$ have full row rank.
    Therefore, $EF^\top + FE^\top = 0$ and $\operatorname{rank}[\,F\mid E\,] = n$.
    The flow rows and the effort rows are separate by construction, so $\gD(\Omega)$ is a separable Dirac structure with exactly $n_f$ independent flow constraints and $n_e$ independent effort constraints.

    \smallskip\noindent
    \textbf{Necessity (ii).}
    Let $\gD_\star$ be any separable Dirac structure with exactly $n_f$ independent flow constraints.
    By the definition of separability, $\gD_\star$ has a kernel representation, Eq.~\eqref{eq:kernel}, in which every row involves only flows or only efforts.
    Collecting the $n_f$ flow rows and the remaining $n_e$ effort rows, we can write its matrices as
    \begin{equation}
        F = \begin{bmatrix} W_f \\ 0 \end{bmatrix}, \qquad
        E = \begin{bmatrix} 0 \\ W_e \end{bmatrix}, \qquad
        W_f \in \R^{n_f \times n}, \quad W_e \in \R^{n_e \times n},
    \end{equation}
    up to the recombination of rows within each block, and the two conditions of Eq.~\eqref{eq:conditions} reduce to $W_e W_f^\top = 0$ and to $W_f$ and $W_e$ having full row rank, with $n_f + n_e = n$.
    The Dirac structure is rewritten as $\{(f, e) \mid W_f f = 0,\; W_e e = 0\}$, which depends only on the row spaces of $W_f$ and $W_e$.
    The rows can be orthonormalized within each block, and then $W = \begin{bsmallmatrix} W_f \\ W_e \end{bsmallmatrix}$ is an orthogonal matrix.

    If $n_f n_e = 0$, take $\Omega = 0$.
    Given a fixed orthogonal matrix $W_0$, applying the CS decomposition to $W W_0^\top$ gives
    \begin{equation}
        W W_0^\top = U \exp\big(S(\Theta)\big) V^\top, \qquad U = \operatorname{diag}(U_f,U_e), \qquad V = \operatorname{diag}(V_f,V_e),
    \end{equation}
    where $U_f,V_f \in \R^{n_f \times n_f}$ and $U_e,V_e \in \R^{n_e \times n_e}$ are orthogonal, and $\Theta \in \R^{n_f \times n_e}$ is zero except for the diagonal entries $\Theta_{ii} = \theta_i \in [0,\pi/2]$ for $i \le \min(n_f,n_e)$.
    Here, $\exp\big(S(\Theta)\big)$ is the cosine--sine factor, which rotates each paired flow and effort coordinate and fixes the unpaired coordinates.
    Set $\Omega := V_f \Theta V_e^\top$.
    Since $S(\Omega) = V S(\Theta) V^\top$,
    \begin{equation}
        W = U V^\top \exp\big(S(\Omega)\big) W_0 = \operatorname{diag}(U_f V_f^\top,U_e V_e^\top)\, W(\Omega).
    \end{equation}
    The block-diagonal coefficient matrix $\operatorname{diag}(U_f V_f^\top,U_e V_e^\top)$ only recombines the flow rows and the effort rows separately and therefore leaves both constraint kernels unchanged.
    Thus, $W(\Omega)$ defines $\gD_\star$, that is, $\gD_\star = \gD(\Omega)$.
\end{proof}

\begin{proof}[Proof of Proposition~\ref{prop:energy}]
    By Theorem~\ref{thm:dirac_parameterize}~(i), $(F, E)$ defines a Dirac structure for every $\Omega$, so Eq.~\eqref{eq:dirac-dae} and condition~(i) of Definition~\ref{def:dirac} give $\langle e_S \mid f_S \rangle + \langle e_R \mid f_R \rangle + \langle e_P \mid f_P \rangle = 0$ along every solution.
    For the $i$-th storage element, the energy-storage relation of either type gives $e_{S,i} f_{S,i} = -\frac{\partial H_{\theta,i}}{\partial z_i} \dot{z}_i = -\frac{\mathrm{d}}{\mathrm{d}t} H_{\theta,i}(z_i)$, and summing over $i$ yields $\langle e_S \mid f_S \rangle = -\frac{\mathrm{d}}{\mathrm{d}t} H_\theta(z)$, which proves Eq.~\eqref{eq:energy-balance}.
    The inequality follows from $\langle e_R \mid f_R \rangle = \sum_j f_{R,j} \hat{\gR}_{\theta,j}(f_{R,j}) \le 0$.
\end{proof}

\section{Implementation of DINEs}
\label{app:implementation}
This appendix describes the implementation of DINEs independently of the tasks; the values of the settings that depend on the task are listed in Appendix~\ref{app:experiments}.
\subsection{Learnable elements and interconnection}
\label{app:dines}
\paragraph{Elements.}
How each element is implemented depends on the prior knowledge given about it, that is, on whether the element is assumed to be linear (see Table~\ref{tab:prior}).
If a storage element is assumed to be linear, its energy is the quadratic function $H_{\theta,i}(z_i) = \frac{1}{2} 10^{\vartheta_i} z_i^2$ with a learned exponent $\vartheta_i$, so that the coefficient is positive.
For nonlinear storage elements, a neural network $\mathrm{NN}_{\theta,i}$ is added to the quadratic term as $H_{\theta,i}(z_i) = \mathrm{NN}_{\theta,i}(z_i) + \frac{1}{2} 10^{\vartheta_i} z_i^2$.

The resistive relation of every resistive element is an explicit map from the flow to the effort, the sum of a neural network and a linear term, $e = \hat{\gR}_{\theta,j}(f) = \mathrm{NN}_{\theta,j}(f) - 10^{\vartheta_j} f$ in our sign convention.
The last affine layer of every network and every exponent $\vartheta$ are initialized to zero, so that at initialization the network outputs vanish and every element is linear with unit coefficient.
Each element network has two hidden layers of 200 units with the hyperbolic tangent, with hidden-layer weights and biases initialized uniformly on $[-1/\sqrt{d_{\mathrm{in}}}, 1/\sqrt{d_{\mathrm{in}}}]$, where $d_{\mathrm{in}}$ is the layer's input dimension.
No sign constraint is imposed on the resistive relation for simplicity.

\paragraph{Interconnection.}
The matrices $(F, E)$ are parameterized as in Theorem~\ref{thm:dirac_parameterize}.
The orthogonal matrix $W_0$ is drawn once at initialization as a random orthogonal matrix, namely the orthogonal factor of the QR decomposition of a matrix with independent standard normal entries~\citep{mezzadri2007how}, and $\Omega$ is initialized to zero. The true $(F, E)$ are never used.

\subsection{Learning and prediction}
\label{app:learning}
\paragraph{Trajectory network.}
The trajectory network $\chi_\eta$ takes the time $t$, normalized by the duration $T$ of the training trajectories, and the index of the trajectory.
The index is embedded into a 256-dimensional space.
The time is encoded by random Fourier features~\citep{tancik2020fourier} at three scales, $1$, $10$, and $100$, following the multi-scale construction of \citet{wang2021eigenvector}.
At each scale $s \in \{1, 10, 100\}$, the time is mapped to 128 sine and 128 cosine features, $\sin(\omega_\ell t / T)$ and $\cos(\omega_\ell t / T)$, with frequencies drawn independently from $\mathcal{N}(0, s^2)$ and held fixed during training.
These features are concatenated with the index embedding and passed through a subnetwork shared by the three scales, with four hidden layers of 256 units and the hyperbolic tangent.
The outputs of the three scales are concatenated and mapped by a final affine layer to the outputs of $\chi_\eta$.
The outputs are the state of every storage element, the flow of every resistive element, and the variable conjugate to the prescribed variable of every source port, whether observed or not.
The affine layers are initialized by the Glorot uniform scheme with zero biases.
The trainable trajectory embeddings are initialized with independent standard normal entries.

\paragraph{Loss.}
In Eq.~\eqref{eq:loss}, each squared norm is implemented as a mean of squares: over the observed channels for the observation term, and over the $n$ rows of the residual $F f + E e$ for the residual term. The observation error is not normalized by the standard deviations of the channels. The weight is $\lambda = 1$ unless stated otherwise.
At every update, the observation term is evaluated on 2{,}000 pairs of a trajectory and one of its sample times, drawn uniformly with replacement from all such pairs, and the residual term on 2{,}000 pairs of a trajectory and a collocation time drawn uniformly over the training duration.

\paragraph{Regularization of unobserved flows.}
When the flow of a resistive element is not observed, its scale is a gauge freedom that the observations do not determine.
To avoid vanishing, a regularization term that pushes the variance of each such flow over the minibatch toward one is added to the loss (see Appendix~\ref{app:task2} for the elements to which it is applied),
\begin{equation}\textstyle
    \mathcal{L}_{\mathrm{var}} = \frac{0.1}{|\mathcal{J}|} \sum_{j \in \mathcal{J}} \big( \operatorname{Var}_{\mathcal{B}}[f_{R,j}] - 1 \big)^2,\label{eq:regularizer}
\end{equation}
where $\mathcal{J}$ is the set of these elements, $\mathcal{B}$ is the minibatch, and $\operatorname{Var}_{\mathcal{B}}$ is the population variance over the minibatch.

\paragraph{Initial state.}
Before prediction, the initial state $z_0$, and the initial rate $\dot{z}_0$, and the algebraic variables $a_0$ (the flows of the resistive elements and the conjugate variables of the source ports) are estimated from the observations $y_0$ and the input $u_0$ at the first time step by minimizing the observation error $\frac{1}{2} \| \hat{y}(z_0, \dot{z}_0, a_0; u_0) - y_0 \|^2$ subject to the learned DAE at $t_0$.
When needed, the derivative of the learned DAE is also used as a constraint.
When a history of $n_h$ samples of inputs and observations is available, the unknowns at all time steps of the history are estimated by minimizing the sum of the observation errors over the history, subject to the learned DAE and the trapezoidal rule between consecutive samples.

\paragraph{Integration.}
At prediction, the learned DAE is integrated from the initial state by the three-stage Radau IIA method of order five~\citep{hairer1996solving} with the fixed step equal to the sampling interval $\Delta t$.
At each step, the stage equations are solved simultaneously by Newton's method.
The prescribed variables of the source ports, which are given as functions of time, are evaluated analytically at every stage.

\subsection{Isolation and composition}
\label{app:coupling}
\paragraph{Gauge of the boundary variable.}
When the boundary port of a subsystem is connected to a source during training, the conjugate boundary variable is an output of $\chi_\eta$ during training and, for prediction, is computed from the learned relations.
Its scale is not determined by the observations of the subsystem (see Section~\ref{sec:discussion}), whereas the composition equates the boundary variables of two subsystems.
Before composition, this scale is therefore calibrated, without retraining, by the following rules.
The calibration is applied consistently to $(F, E)$, the constitutive relations, and the momenta, preserving the observed trajectories of each subsystem.

\paragraph{Fixing the scale from the interconnection.}
For junction interconnections, the nonzero connection coefficients are known a priori to have unit magnitude. This prior is used to fix scales and is not imposed on coefficients representing unknown transformer ratios.
When the learned interconnection, read in reduced row echelon form, has a row that relates the boundary variable, whose scale is undetermined, to an observed variable, whose physical scale is known, through a junction, that is, with a coefficient of unit magnitude, the scale is fixed by normalizing the absolute value of the learned coefficient to $1$.
In this case, no additional observation of the boundary variable is needed for the calibration.
When a single element is isolated for reuse, its scale is fixed in the same way from the learned interconnection of the subsystem in which it was learned, before the element is taken out.
When the row involves several observed variables, the coefficient of the largest magnitude is normalized.

\paragraph{Fixing the scale from measurements.}
Otherwise, when the scale cannot be fixed from the interconnection and the existing observations, it is calibrated from additional measurements of the boundary variable.
With the boundary effort $\hat{e}_\mathrm{b}$ predicted by the model and the measured boundary effort $e_\mathrm{b}^{\mathrm{obs}}$, the scale $\alpha$ and the intercept $\beta$ are estimated by
    \begin{equation*}\textstyle
        (\alpha, \beta) = \operatorname*{arg\,min}_{a, b} \sum_{j=1}^{N_{\mathrm{cal}}} \big( a \hat{e}_{\mathrm{b}, j} + b - e_{\mathrm{b}, j}^{\mathrm{obs}} \big)^2,
    \end{equation*}
    where $N_{\mathrm{cal}}$ is the number of measurements, and the calibrated boundary effort is $\tilde{e}_\mathrm{b} = \alpha \hat{e}_\mathrm{b} + \beta$.
The intercept is a constant shift of the boundary effort. Since the effort rows $E e = 0$ of the DAE are preserved by any shift $\delta e$ of the efforts with $E\, \delta e = 0$, the intercept is absorbed by constant shifts $c_i$ of the efforts of the C-type storage elements, $\tilde{e}_{C,i} = \alpha e_{C,i} + c_i$, which is equivalent to replacing the energy of each C-type storage element by $\tilde H_i(q_i) = \alpha H_i(q_i) + c_i q_i$; the shifts are chosen so that $E\, \delta e = 0$ holds for the rescaled matrix $E$ and $\delta e$ with $c_i$ at the ports of the C-type elements, $\beta$ at the boundary port, and $0$ at the other ports.

\paragraph{Fixing the Transformer ratio from measurements.}
Similarly, if two subsystems are connected through a transformer with an unknown ratio, the ratio is identified after the scales of both subsystems are fixed.
The boundary outputs of the two subsystems are evaluated at a few states of the composed system, and the ratio is obtained by closed-form least squares from the relation of the transformer.
The ratio is then frozen.
This procedure is required for the piston area of Task~4, mentioned in Appendix~\ref{app:task4}.

\paragraph{Composed DAE.}
The composed system is the DAE formed by the frozen DAEs of the subsystems and the connection; the boundary variables are unknowns of this DAE and are solved at every stage of the Radau IIA method together with the states.
No data of the composed system are used to retrain the subsystems.

\subsection{Ablations}
\label{app:ablation}
\paragraph{DINEs (soft).}
DINEs (soft) replace the exact parameterization of Theorem~\ref{thm:dirac_parameterize} by independently learned matrices $F, E \in \R^{n \times n}$ and soft penalties.
The loss of Eq.~\eqref{eq:loss} is augmented by
\begin{equation*}\textstyle
    \mathcal{L}_1 = \big\| E F^\top + F E^\top \big\|_{\mathrm{F}}^2, \qquad \mathcal{L}_2 = \big\| F F^\top + E E^\top - I \big\|_{\mathrm{F}}^2 ,
\end{equation*}
where $\| \cdot \|_{\mathrm{F}}$ denotes the Frobenius norm, and both penalties have unit weight.
The second term penalizes deviations from row orthonormality of $[\, F \mid E \,]$; exact row orthonormality implies the rank condition of Eq.~\eqref{eq:conditions}.
Both penalties vanish identically under the parameterization of Theorem~\ref{thm:dirac_parameterize} (see Appendix~\ref{app:proofs} for proof).
The entries of $F$ and $E$ are initialized independently from a Gaussian distribution with mean zero and variance $1 / (2n)$.
No block structure or value of $n_f$ is enforced.
The remaining training settings and the initial-state estimation and time-integration procedures are unchanged.
The learned matrices are used directly for prediction, without projection onto a Dirac structure.
For this ablation, ``not converged'' in Tables~\ref{tab:results} and~\ref{tab:thresholds} denotes a numerical failure during initial-state estimation or time integration that prevents completion of all ten test trajectories for a seed.
This designation is not based on a threshold on the training loss or prediction error.
Means and standard deviations are reported only over seeds that complete all ten trajectories, with the number of excluded seeds stated in the tables.

\paragraph{Misspecified $n_f$.}
DINEs with a misspecified $n_f$ are the model with $n_f$ set to a value different from that of the true interconnection and $n_e = n - n_f$; all other settings are unchanged.
By Theorem~\ref{thm:dirac_parameterize}, the true Dirac structure is then outside the model family, so that the misspecification leads to a catastrophic failure rather than to a silently wrong model (see Section~\ref{sec:discussion}).
The values of $n_f$ examined are listed in Appendix~\ref{app:task1}.

\section{Experiments}
\label{app:experiments}
\subsection{Common settings}
\label{app:common}
\paragraph{Systems.}
Table~\ref{tab:tasks} lists the elements, observations, and inputs of the learned systems of Tasks~1--4.
\begin{table}[t]
    \centering
    \caption{Systems of Tasks 1--4.}
    \label{tab:tasks}
    \small
    \setlength{\tabcolsep}{4pt}
    \begin{tabularx}{\linewidth}{lcccccccYY}
        \toprule
        Task: system             & C & I  & R  & Se & Sf & $n$ & $(n_f, n_e)$ & Observations                & Inputs                         \\
        \midrule
        1: Three-way junction    & 3 & 3  & -- & -- & -- & 6   & (2, 4)       & 6 states                    & --                             \\
        2: RC ladder             & 4 & -- & 7  & 2  & -- & 13  & (8, 5)       & 10 of 13 currents           & 2 source voltages              \\
        3: Subsystem A           & 3 & 3  & 2  & 1  & -- & 9   & (6, 3)       & 6 states                    & boundary force                 \\
        \phantom{3:} Subsystem B & 1 & 2  & 1  & 2  & -- & 6   & (4, 2)       & 3 states                    & boundary and external forces   \\
        4: Hydraulic             & 1 & 1  & 1  & 1  & 1  & 5   & (3, 2)       & 2 states                    & source pressure, boundary flow \\
        \phantom{4:} Mechanical  & 3 & 2  & 2  & -- & 1  & 8   & (5, 3)       & 5 states                    & boundary velocity              \\
        \bottomrule
    \end{tabularx}
    \par\smallskip
    \begin{minipage}{\linewidth}
        \footnotesize
        Boundary variables are recorded separately for training the boundary networks of Coupled Neural ODEs and, for the mechanical subsystem of Task~4, calibrating DINEs (see Table~\ref{tab:prior}).
    \end{minipage}
\end{table}
The coefficients of the true systems below are disclosed for reproducibility and are not given to any model.

\paragraph{Sampling and data.}
All trajectories are sampled with the time step $\Delta t = 0.1$ in the units of the simulation, without noise. Durations are counted in steps, so that a trajectory of $K$ steps has $K + 1$ samples, and the frequencies of the inputs are given in cycles per step.
Every training set, of Tasks~1 and~2 and of each subsystem of Tasks~3 and~4, consists of 100 trajectories of 500 steps, and every test set of 10 trajectories of 10{,}000 steps.

\paragraph{Reference trajectories.}
The reference trajectories are generated by integrating the true equations with the eighth-order Runge--Kutta method DOP853~\citep{hairer1993solving} with the relative tolerance $10^{-11}$, the absolute tolerance $10^{-13}$, and the maximum step $\Delta t$.
The generated data are checked against a second integration with the tolerances $10^{-12}$ and $10^{-14}$ and the maximum step $\Delta t / 2$.

\paragraph{Training.}
All models are trained with five seeds, which change the initialization and the sampling of minibatches, with minibatches of 2{,}000.
The optimizer is Adam~\citep{kingma2015adam} with the learning rate $10^{-3}$, $(\beta_1, \beta_2) = (0.9, 0.999)$, and no weight decay, with cosine annealing~\citep{loshchilov2017sgdr} of the learning rate to zero over all updates.
The number of updates is 100{,}000 unless stated otherwise.
Training is in single precision. Whenever a loss is normalized by standard deviations, these are the sample standard deviations over the training set. The parameters after the last update are used, without early stopping and without model selection on the test set.

\paragraph{Prediction.}
Predictions are computed in double precision.
The learned DAEs of DINEs, the composed systems, and the receiving system of Task~5 are integrated as described in Appendices~\ref{app:learning} and~\ref{app:coupling}.
The ODE models are integrated by the adaptive Dormand--Prince method with the relative tolerance $10^{-9}$ and the absolute tolerance $10^{-11}$, the maximum step $\Delta t$, and outputs at every step.

\paragraph{Evaluation metrics.}
The metrics of Section~\ref{sec:settings} are defined as follows.
Let $\hat{y}(t_k)$ and $y(t_k)$ be the predicted and true observations at the $k$-th sample and $d$ the number of observed channels. The squared error at sample $k$ and the MSE over an evaluation horizon of $K$ steps ($K = 10{,}000$ unless stated otherwise), including the initial sample, are
\begin{equation}
    e_k = \frac{1}{d} \big\| \hat{y}(t_k) - y(t_k) \big\|^2,
    \qquad
    \mathrm{MSE} = \frac{1}{K + 1} \sum_{k=0}^{K} e_k .
    \label{eq:mse}
\end{equation}
The valid prediction time (VPT)~\citep{vlachas2020backpropagation} for a threshold $\varepsilon$ is the fraction of the horizon before $e_k$ first exceeds $\varepsilon$: with $k_*$ the first index such that $e_{k_*} > \varepsilon$,
\begin{equation}
    \mathrm{VPT}(\varepsilon) = \frac{\max(k_* - 1, 0)}{K},
    \label{eq:vpt}
\end{equation}
and $\mathrm{VPT}(\varepsilon) = 1$ if the threshold is never exceeded.
Both metrics are averaged over the ten test trajectories of each seed.

\subsection{Task 1: three-way junction}
\label{app:task1}
\paragraph{System.}
Three masses $m_i$ are attached through springs to one massless junction.
With the elongation $q_i$ of spring $i$, the velocity $v_i$ of mass $i$, and the velocity $v_J$ of the junction,
\begin{equation*}
    \dot{q}_i = v_i - v_J, \qquad
    m_i \dot{v}_i = -e_i(q_i), \qquad
    0 = \sum_{i=1}^{3} e_i(q_i), \qquad i = 1, 2, 3,
\end{equation*}
where the last equation is the force balance at the junction.
The spring forces are $e_i(q_i) = a_i q_i + b_i q_i^3$ with $(a_i, b_i) = (0.4, 0.05), (0.3, 0.4), (0.25, 0.15)$, and the masses are $(m_1, m_2, m_3) = (1.0, 1.2, 0.8)$; there are no dampers and no inputs.
For the reference trajectories, the force balance is differentiated in time, which gives $v_J = \sum_i k_i(q_i) v_i / \sum_i k_i(q_i)$ with the tangent stiffness $k_i(q_i) = a_i + 3 b_i q_i^2$; this $v_J$ is substituted, and the resulting ODE is integrated.
In port-based terms, the system has three C-type and three I-type elements, and the junction imposes two constraints among the flows and four among the efforts.

\paragraph{Data.}
The initial elongations $q_1$ and $q_2$ are uniform on $[-1.2, 1.2]$ and the initial velocities uniform on $[-0.6, 0.6]$; $q_3$ is solved from the force balance, whose solution is unique since $e_3$ is monotonic.
All six states $(q_1, q_2, q_3, v_1, v_2, v_3)$ are observed.
The initial states of the test trajectories are drawn from the same distributions as those of the training trajectories.

\paragraph{DINEs.}
The I-type elements are assumed to be linear, with their masses learned, and the springs are nonlinear.
The main setting is $(n_f, n_e) = (2, 4)$; in the sweep, $n_f \in \{1, 2, 3, 4, 5\}$ with $n_e = 6 - n_f$, and all other settings, including the initialization rule and the observations, are unchanged.
DINEs estimate the initial state from the six observations at the first sample.

\paragraph{Neural ODEs and PoDiNNs.}
\label{app:baselines}
Neural ODEs~\citep{chen2018neural} take the observed states as the state $x$ and learn the vector field by a network with two hidden layers of 200 units and the hyperbolic tangent.
PoDiNNs~\citep{podinn} consist of one network per C-type element for its energy, of the same multilayer-perceptron structure as the element networks of DINEs but without an explicit quadratic term, following the original implementation, one learned mass $10^{\vartheta_i}$ per I-type element, and a constant matrix $B_\theta \in \R^{n_I \times n_C}$, initialized to zero, that couples the two:
\begin{equation*}
    \dot{q} = B_\theta^\top v,
    \qquad
    \dot{v} = -M^{-1} B_\theta\, \nabla_q H_\theta(q),
\end{equation*}
where $M$ is the diagonal matrix of the masses, and $n_C$ and $n_I$ are the numbers of C-type and I-type elements.
Here, $B_\theta$ is the I--C coupling block of the matrix of the input-output representation in Section~\ref{sec:related}; on Task~1, which has no resistive elements and no sources, the coupling parameters of the original implementation reduce to this zero-initialized constant matrix.
PoDiNNs~\citep{podinn} are given the list of elements, the correspondence between the observed channels and the states of the elements, and the linearity of the I-type elements.
Neither the constraint nor the state-dependent matrix $B(q)$ of the reduced ODE is supplied, and the states are not projected onto the true constraints.
Both models are trained by one-step prediction: from a training sample $x_k$, the state after $\Delta t$ is predicted by one step of the classical fourth-order Runge--Kutta method.
The loss is the squared error to $x_{k+1}$, normalized by the sample standard deviation of each channel over the training set and averaged over the channels and the minibatch.
Predictions start from the observed initial state, $x_0 = y_0$, and integrate the learned ODE by the adaptive Dormand--Prince method~\citep{dormand1980family}.

\subsection{Task 2: RC ladder}
\label{app:task2}
\paragraph{System.}
Two voltage sources $u_\mathrm{A}$ and $u_\mathrm{B}$, two nodes $a$ and $b$, four grounded linear capacitors $\mathrm{C}_1, \dots, \mathrm{C}_4$, and seven nonlinear resistors $\mathrm{R}_0, \dots, \mathrm{R}_6$ are connected as follows: $\mathrm{R}_0$ connects source A to node $a$, $\mathrm{R}_1$ and $\mathrm{R}_2$ connect node $a$ to $\mathrm{C}_1$ and $\mathrm{C}_2$, $\mathrm{R}_5$ connects node $a$ to node $b$, $\mathrm{R}_3$ and $\mathrm{R}_4$ connect node $b$ to $\mathrm{C}_3$ and $\mathrm{C}_4$, and $\mathrm{R}_6$ connects node $b$ to source B.
In Figure~\ref{fig:systems}, the source voltages are denoted by $V_{E_1} = u_\mathrm{A}$ and $V_{E_2} = u_\mathrm{B}$.
Capacitor $i$ has the charge $q_i$ and the voltage $v_i = \gamma_i q_i$ with $(\gamma_1, \dots, \gamma_4) = (0.375, 2.1, 0.45, 2.4)$, that is, the capacitances are $1 / \gamma_i$. Resistor $j$ has the voltage drop $\Delta V_j = r_j I_j + k_j I_j^3$ in the direction of Table~\ref{tab:rc}. The currents satisfy $I_0 = I_1 + I_2 + I_5$ and $I_5 = I_3 + I_4 + I_6$ at the two nodes, and the charges satisfy $\dot{q} = (I_1, I_2, I_3, I_4)$.
The node voltages are algebraic variables determined by Kirchhoff's current laws together with the nonlinear resistor relations.
For the reference trajectories, the resulting nonlinear equations are solved by Newton's method at every evaluation.
In port-based terms, the system has 13 ports, four C-type elements, seven resistive elements, and two effort sources; its graph has 13 branches and nine nodes, hence eight current-law constraints and five voltage-law constraints.
\begin{table}[t]
    \centering
    \caption{Resistors of Task 2.}
    \label{tab:rc}
    \small
    \begin{tabular}{lccc}
        \toprule
        Resistor       & $\Delta V_j$       & $r_j$  & $k_j$  \\
        \midrule
        $\mathrm{R}_0$ & $u_\mathrm{A} - a$ & 0.8299 & 0.7991 \\
        $\mathrm{R}_1$ & $a - v_1$          & 0.6711 & 4.6312 \\
        $\mathrm{R}_2$ & $a - v_2$          & 0.7407 & 6.3965 \\
        $\mathrm{R}_3$ & $b - v_3$          & 0.6981 & 6.5022 \\
        $\mathrm{R}_4$ & $b - v_4$          & 0.5908 & 5.1852 \\
        $\mathrm{R}_5$ & $a - b$            & 0.7813 & 2.3884 \\
        $\mathrm{R}_6$ & $b - u_\mathrm{B}$ & 0.7018 & 0.7792 \\
        \bottomrule
    \end{tabular}
\end{table}

\paragraph{Inputs and data.}
Each source voltage is a sum of six cosines and an offset, $u(t) = c + \sum_{\ell=1}^{6} A_\ell \cos(2\pi \nu_\ell\, t / \Delta t + \phi_\ell)$, where the frequencies $\nu_\ell$ are in cycles per step, so that $\nu_\ell / \Delta t$ is the frequency per unit time; the inputs are evaluated analytically at all sample, collocation, and stage times.
The two sources are generated as a common mode and a differential mode, $u_\mathrm{A} = u_\mathrm{c} + u_\mathrm{d}$ and $u_\mathrm{B} = u_\mathrm{c} - u_\mathrm{d}$, each of which is a sum of three cosines and an offset.
The trajectories are divided into six groups, which combine the upper frequency $\nu_{\max} \in \{0.03, 0.08\}$ with the level $\kappa \in \{4, 12, 20\}$, with the group labels randomly permuted over the trajectories; for each $\nu_{\max}$, the training groups contain 17, 17, and 16 trajectories for the three increasing levels and the test groups 2, 2, and 1 trajectories.
In each group, the frequencies of the common mode are uniform on the lower half $[0.0025, (0.0025 + \nu_{\max}) / 2]$ of the band and those of the differential mode on the upper half $[(0.0025 + \nu_{\max}) / 2, \nu_{\max}]$, the phases are uniform on $[0, 2\pi]$, and, before the rescaling below, the offset of each mode is uniform on $[-0.5 \kappa, 0.5 \kappa]$.
The rescaling gives the two modes the same mean square on a fixed grid.
Before the rescaling, the amplitudes are uniform on $[0.25 \kappa, 0.65 \kappa]$ for the common mode and on $[0.2 \kappa, 0.52 \kappa]$ for the differential mode.
For the amplitude normalization, provisional frequencies $\nu^{(0)}_{m,\ell}$ are first drawn independently and uniformly from the full band $[\nu_{\min},\nu_{\max}]$ for both modes, where $\nu_{\min}=0.0025$ and $m\in\{c,d\}$.
Using these provisional frequencies and the original phases, the sum $S$ of the mean squares of the two modes, including their offsets, is computed on the grid $t_j=j\Delta t/2$, $j=0,\ldots,500$.
The frequencies are then mapped to the respective half-bands by
\begin{equation*}
    \nu_{c,\ell}
    =\frac{\nu_{\min}+\nu^{(0)}_{c,\ell}}{2},
    \qquad
    \nu_{d,\ell}
    =\frac{\nu_{\max}+\nu^{(0)}_{d,\ell}}{2},
\end{equation*}
without drawing new frequencies, and the differential-mode phases are shifted by $\pi/2$.
Finally, the amplitudes and offset of each mode are multiplied by a single factor specific to that mode, so that its mean square, evaluated with the remapped frequencies and updated phases on the same grid, equals $S/4$.
The initial capacitor voltages are obtained from $\tilde{v}_i = a^0_i \xi_i$ with $\xi_i$ uniform on $[-1.1, 1.1]$ and $a^0 = (0.75, 1.05, 0.9, 1.2)$: within the pairs $(1, 2)$ and $(3, 4)$, the mean of the two voltages is kept and their difference is tripled, which gives $\hat{v}_i$, and then $v_i(0) = s \hat{v}_i$ with $s = \big( \sum_i \tilde{v}_i^2 / a^0_i \big)^{1/2} \big( \sum_i \hat{v}_i^2 / a^0_i \big)^{-1/2}$, so that the weighted sum of squares $\sum_i v_i(0)^2 / a^0_i$ equals that of $\tilde{v}$. This transformation increases the within-pair voltage differences relative to their common components while preserving the specified weighted sum of squares.
The initial charges are $q_i(0) = v_i(0) / \gamma_i$.
The initial states of the test trajectories are drawn from the same distributions as those of the training trajectories.
The observed channels are the currents at ten of the 13 ports: the source currents $I_\mathrm{A} = I_0$ and $I_\mathrm{B} = -I_6$, the currents of the four capacitors, and the currents of $\mathrm{R}_1, \dots, \mathrm{R}_4$; all ten are given to all models alike.
The current labels in Figure~\ref{fig:systems} satisfy $I_{E_1} = I_\mathrm{A} = I_0$, $I_{E_2} = I_\mathrm{B} = -I_6$, and $I_{C_i} = I_{R_i} = I_i = \dot{q}_i$ for $i = 1, \dots, 4$.
The charges, the node voltages, and the currents of $\mathrm{R}_0$, $\mathrm{R}_5$, and $\mathrm{R}_6$ are not observed.
That the capacitor currents coincide with the currents of $\mathrm{R}_1, \dots, \mathrm{R}_4$, and $I_\mathrm{A}$ and $I_\mathrm{B}$ with the currents of $\mathrm{R}_0$ and $\mathrm{R}_6$ up to sign, is not given to any model.
Each test trajectory has 20 samples of inputs and observations before the prediction as the history.

\paragraph{DINEs.}
The capacitors are assumed to be linear with unknown capacitances, and the resistors are nonlinear.
DINEs use $\lambda = 10$, the regularization term $\mathcal{L}_{\mathrm{var}}$ (as defined in Appendix~\ref{app:learning}) for the flows of the three unobserved resistors $\mathrm{R}_0$, $\mathrm{R}_5$, and $\mathrm{R}_6$, and 500{,}000 updates.
The initial state is estimated from the history of 20 samples, starting from zero charges.
The model is then advanced by one step from the last sample of the history, and the following 10{,}000 steps are evaluated, including the first predicted sample.

\paragraph{CT-SUBNET.}
\label{app:ctsubnet}
CT-SUBNET~\citep{beintema2023continuous} consists of an encoder $\psi_\theta$ that maps the inputs and outputs of the $n_h = 20$ past samples to the state $z \in \R^{n_z}$, a vector field $\dot{z} = \varphi_\theta(z, u)$, and an observation $\hat{y} = h_\theta(z, u)$.
The observation map takes the input as well, unlike the original form $h_\theta(z)$, because the observed currents depend directly on the source voltages; otherwise the currents would have to be reproduced from the latent state alone.
Each of the three maps is the sum of a network with two hidden layers of 64 units and the hyperbolic tangent and an affine map from its input to its output, with zero initial biases.
The inputs and outputs are standardized by their means and sample standard deviations over the training set, and the output of $h_\theta$ is transformed back to physical units.
The output of the network of the vector field is divided by a time scale of $50 \Delta t$.
At every update, 2{,}000 windows are drawn from the training set, each consisting of the $n_h$ past samples for the encoder and the following 50 samples.
Over the 50 samples, the model is integrated by the classical fourth-order Runge--Kutta method with one step per $\Delta t$ and the analytic inputs at the stage times.
The mean squared error of the 50 predicted outputs, normalized by the sample standard deviation of each output channel, is minimized with respect to $\psi_\theta$, $\varphi_\theta$, and $h_\theta$ jointly.
For prediction, the encoder maps the history to the state at the first predicted sample once, and the model is integrated with the known inputs by the adaptive Dormand--Prince method; neither the encoder nor the state is updated by observations afterwards.
The state dimension is $n_z \in \{4, 8, 12, 16, 20\}$, and the model is trained for 500{,}000 updates.

\subsection{Task 3: rigid coupling}
\label{app:task3}
\paragraph{System.}
Each subsystem is a chain of masses, springs, and dampers with one boundary port.
Springs have $e = a q + b q^3$, and dampers have $e = -R(f)$ with $R(f) = a f + b f^3$, in our sign convention; the equations of motion below are written with the resistive forces $R$, which are the negatives of the efforts.
Subsystem A is a chain from a wall, a spring with a damper, mass $m_1$, a spring, mass $m_2$, a spring with a damper, and mass $m_3$, to which the boundary force $u_\mathrm{A}$ is applied.
With the absolute positions $x_1, x_2, x_3$ of the masses, the elongations of the three branches are $(q_1, q_2, q_3) = (x_1, x_2 - x_1, x_3 - x_2)$, which are observed together with the velocities $v_1, v_2, v_3$, and the equations of motion are
\begin{equation*}
    \begin{aligned}
         & \dot{q}_1 = v_1, \qquad \dot{q}_2 = v_2 - v_1, \qquad \dot{q}_3 = v_3 - v_2,                               \\
         & m_1 \dot{v}_1 = -T_1 + T_2, \qquad m_2 \dot{v}_2 = -T_2 + T_3, \qquad m_3 \dot{v}_3 = -T_3 + u_\mathrm{A},
    \end{aligned}
\end{equation*}
where $T_i$ is the total force of the spring and the damper of branch $i$, $T_i = e_i(q_i) + R_i(\dot{q}_i)$ for $i = 1, 3$ and $T_2 = e_2(q_2)$.
Subsystem B consists of mass $m_{\mathrm{B}1}$, to which the boundary force $u_{\mathrm{B}0}$ is applied, a spring with a damper, and mass $m_{\mathrm{B}2}$, to which the external force $u_{\mathrm{B}1}$ is applied.
With $q_\mathrm{B} = x_{\mathrm{B}2} - x_{\mathrm{B}1}$ and $T_\mathrm{B} = e_\mathrm{B}(q_\mathrm{B}) + R_\mathrm{B}(v_{\mathrm{B}2} - v_{\mathrm{B}1})$,
\begin{equation*}
    \dot{q}_\mathrm{B} = v_{\mathrm{B}2} - v_{\mathrm{B}1}, \qquad
    m_{\mathrm{B}1} \dot{v}_{\mathrm{B}1} = T_\mathrm{B} - u_{\mathrm{B}0}, \qquad
    m_{\mathrm{B}2} \dot{v}_{\mathrm{B}2} = -T_\mathrm{B} - u_{\mathrm{B}1}.
\end{equation*}
The recorded boundary velocities are $v_3$ of A and $v_{\mathrm{B}1}$ of B; that they coincide with observed mass velocities is not given to any model.
Table~\ref{tab:subsystems} lists the elements of the subsystems.
\begin{table}[t]
    \centering
    \caption{Subsystems of Tasks 3 and 4.}
    \label{tab:subsystems}
    \small
    \setlength{\tabcolsep}{3pt}
    \begin{tabular}{llllll}
        \toprule
        Subsystem          & Inertias      & Springs $(a, b)$                                                                 & Dampers $(a, b)$                                                       & \begin{tabular}[c]{@{}l@{}}Boundary\\(prescribed $\to$ recorded)\end{tabular} & Observed       \\
        \midrule
        Task 3, A          & $1, 1.2, 1.4$ & \begin{tabular}[t]{@{}l@{}}$(0.4, 0.1)$\\$(0.3, 0.1)$\\$(0.2, 0.1)$\end{tabular} & \begin{tabular}[t]{@{}l@{}}$(0.07, 0.45)$\\$(0.09, 0.55)$\end{tabular} & $F_\mathrm{b} \to v$                                                          & $3\,q$, $3\,v$ \\
        \addlinespace
        Task 3, B          & $0.9, 1.1$    & $(0.35, 0.15)$                                                                   & $(0.08, 0.5)$                                                          & $F_\mathrm{b} \to v$                                                          & $1\,q$, $2\,v$ \\
        \addlinespace
        Task 4, hydraulic  & $0.9$         & $(0.8, 0)$ (reservoir)                                                           & fluid resistor                                                         & $Q_\mathrm{b} \to P$                                                          & $1\,q$, $1\,Q$ \\
        \addlinespace
        Task 4, mechanical & $1, 1.4$      & \begin{tabular}[t]{@{}l@{}}$(0.4, 0.1)$\\$(0.3, 0.1)$\\$(0.2, 0.1)$\end{tabular} & \begin{tabular}[t]{@{}l@{}}$(0.07, 0.45)$\\$(0.09, 0.55)$\end{tabular} & $v \to F_\mathrm{b}$                                                          & $3\,q$, $2\,v$ \\
        \bottomrule
    \end{tabular}
\end{table}

\paragraph{Boundary port.}
During the training of a subsystem, its boundary port is a source: the prescribed boundary variable, a force, is a known input, and the conjugate variable, a velocity, is recorded but excluded from the losses of the dynamics training of every model.
The recorded boundary variables of the training data are used for the training of the boundary network of Coupled Neural ODEs (and, in Task~4, for the calibration of DINEs).
No observation of the boundary variables is given during the prediction of the composed systems.
In the composed system, the boundary velocities are equated, the reaction force $\mu$ acts with $+\mu$ on A and $-\mu$ on B and is an unknown, and $u_{\mathrm{B}1}$ is the only known input; no data of the composed system are used for training.
The composed systems are integrated as DAEs with the boundary variables as unknowns, and the initial state is estimated from the nine observations at the first sample.

\paragraph{Data.}
The initial positions of A are uniform on $[-1, 1]$ and its velocities on $[-0.6, 0.6]$; the initial elongation of B is uniform on $[-0.8, 0.8]$ and its velocities on $[-0.5, 0.5]$.
The boundary force of A is a sum of three cosines with frequencies uniform on $[0.005, 0.02]$, amplitudes uniform on $[0.4, 0.8]$, and phases uniform on $[0, 2\pi]$. Each force of B is a sum of four cosines with frequencies drawn from the multiples of the reciprocal of the training duration within $[0.006, 0.018]$, amplitudes uniform on $[0.08, 0.24]$, and phases uniform on $[0, 2\pi]$.
The reference trajectories of the composed system are generated by substituting the coupling variables, computed analytically, into the differential equations and integrating the resulting ODE in the original states of the subsystems with the method and tolerances of Appendix~\ref{app:common}.
Here, $m_\mathrm{A}$ and $m_\mathrm{B}$ are the two coupled masses, $F_\mathrm{A}$ and $F_\mathrm{B}$ are the forces acting on them other than the reaction force, and the reaction force $\mu = (m_\mathrm{A} F_\mathrm{B} - m_\mathrm{B} F_\mathrm{A}) / (m_\mathrm{A} + m_\mathrm{B})$ keeps the boundary velocities equal; the initial boundary velocities are set equal.

\paragraph{Test trajectories.}
The test trajectories of the composed systems are generated by the following rule.
Each candidate starts from independently selected subsystem training initial states, adjusted to satisfy the coupling constraints.
The input frequencies and amplitudes are taken from the selected training trajectories, with independently redrawn uniform phases.
A candidate is accepted if, at all samples of the 10{,}000 steps of its reference trajectory, every state, known input, recorded boundary variable, and resistive flow stays within the range of the corresponding channel in the training data of the subsystems, and the residual of the coupling constraints, $\max_k |v_3(t_k) - v_{\mathrm{B}1}(t_k)|$ at the samples, stays below $2 \times 10^{-7}$ (an acceptance criterion after generation, not a tolerance of the integrator).
The first ten accepted candidates, out of at most 500, form the test set, which is common to both methods and all seeds.

\paragraph{DINEs.}
The I-type elements are assumed to be linear with unknown coefficients, and the springs and dampers are nonlinear.
The scale of the recorded boundary velocity of each subsystem is fixed from the learned interconnection as described in Appendix~\ref{app:coupling}, without additional observations of the boundary variable.

\paragraph{Coupled Neural ODEs.}
\label{app:cnode}
\textbf{Subsystem dynamics.}
For each subsystem, a Neural ODE of Appendix~\ref{app:baselines} takes the observed states as its state $x$ and the prescribed boundary variable, together with the other known inputs, as its input $u$, $\dot{x} = f_\theta(x, u)$, and is trained by the same one-step loss, with the known inputs evaluated analytically at the stage times.

\textbf{Boundary network.}
Composition also requires the boundary variable conjugate to the prescribed boundary input, so a separate network $h_\phi(x)$ with two hidden layers of 64 units, the hyperbolic tangent, and one scalar output predicts it from the current state of the subsystem only: the boundary velocity on Task~3.
After the training of $f_\theta$, its parameters are frozen and $h_\phi$ is trained separately on the recorded states $x_j^{\mathrm{obs}}$ and the recorded boundary variables $b_j^{\mathrm{obs}}$ of the training data of the subsystem, not on states generated by the learned vector field.
At every update, a minibatch $\mathcal{B}$ of 2{,}000 pairs is drawn with replacement, and
\begin{equation*}\textstyle
    \mathcal{L}_{\mathrm{boundary}} = \frac{1}{|\mathcal{B}|} \sum_{j \in \mathcal{B}} \Big( \frac{h_\phi(x_j^{\mathrm{obs}}) - b_j^{\mathrm{obs}}}{s_b} \Big)^2
\end{equation*}
is minimized, where $s_b$ is the sample standard deviation of the recorded boundary variable over the training set, with the optimizer, the schedule, and the number of updates of Appendix~\ref{app:common}.
The normalization is applied to the loss only, so the network predicts the boundary variable in its recorded physical units.

\textbf{Composition and prediction.}
Both vector fields and both boundary networks are frozen before composition.
During prediction, each boundary network is evaluated on the current predicted state of its subsystem, and its output enters the same connection as for DINEs: on Task~3, the outputs of the two boundary networks are equated as the boundary velocities, and the reaction force is an unknown input of the vector fields.
The composed system is integrated, and its initial state is estimated, using the procedures of Appendices~\ref{app:learning} and~\ref{app:coupling} and the settings stated above; neither the vector fields nor the boundary networks are retrained on data of the composed system.

\subsection{Task 4: hydraulic--mechanical coupling}
\label{app:task4}
\paragraph{System.}
Both subsystems are chains with one boundary port as in Task~3 (see Appendix~\ref{app:task3}), the mechanical subsystem with masses, springs, and dampers, and the hydraulic subsystem with a fluid inertia, a reservoir, and a fluid resistor; the springs and dampers have the same forms as in Task~3, with the coefficients of Table~\ref{tab:subsystems}.
The hydraulic subsystem consists of a pressure source $u_P$, a fluid resistor, one fluid inertia, and a reservoir, with the states $(q_h, Q_1)$ and the known inputs $u_P$ and the boundary flow $Q_\mathrm{b}$:
\begin{equation*}
    P = 0.8\, q_h, \qquad
    \dot{q}_h = Q_1 - Q_\mathrm{b}, \qquad
    0.9\, \dot{Q}_1 = u_P - P - R(Q_1),
\end{equation*}
where $R(Q) = 0.16\, Q \sqrt{Q^2 + (0.18 / 0.16)^2}$; the boundary pressure $P$ is recorded.
The mechanical subsystem is a chain from the boundary with the prescribed velocity $v_\mathrm{b}$, a spring, mass $m_1$, a spring with a damper, mass $m_2$, and a spring with a damper, to a wall, with the states $(q_1, q_2, q_3, v_1, v_2)$ and the input $v_\mathrm{b}$ only:
\begin{equation*}
    \begin{aligned}
         & \dot{q}_1 = v_\mathrm{b} - v_1, \qquad
        \dot{q}_2 = v_1 - v_2, \qquad
        \dot{q}_3 = v_2,                                      \\
         & m_1 \dot{v}_1 = e_1 - e_2 - R_1(v_1 - v_2), \qquad
        m_2 \dot{v}_2 = e_2 - e_3 + R_1(v_1 - v_2) - R_2(v_2),
    \end{aligned}
\end{equation*}
where $e_i$ and $R_i$ are the spring forces and the damper forces of Table~\ref{tab:subsystems}; the boundary force $F_\mathrm{b} = -e_1(q_1)$ is recorded.
The coupling is $Q_\mathrm{b} = A_\mathrm{p} v_\mathrm{b}$ and $A_\mathrm{p} P + F_\mathrm{b} = 0$ with the unknown piston area $A_\mathrm{p}$, whose true value $1.3$ is not given.

\paragraph{Boundary port.}
The protocol of Task~3 applies: the boundary flow $Q_\mathrm{b}$ is prescribed and the pressure $P$ is recorded for the hydraulic subsystem, and the velocity $v_\mathrm{b}$ is prescribed and the force $F_\mathrm{b}$ is recorded for the mechanical subsystem (see Table~\ref{tab:subsystems}).
In the composed system, only the pressure source is a known input, and $Q_\mathrm{b}$ and $v_\mathrm{b}$ are unknowns.
The composed system is integrated as a DAE with the boundary variables as unknowns, and the initial state is estimated from the seven observations at the first sample.

\paragraph{Data.}
The initial states of the hydraulic subsystem are uniform on $[-0.7, 0.7]$ for $q_h$ and on $[-0.45, 0.45]$ for $Q_1$; those of the mechanical subsystem are uniform on $[-0.7, 0.7]$ for $q_1$ and $q_2$, with $q_3 = -q_1 - q_2$, and on $[-0.45, 0.45]$ for $v_1$ and $v_2$.
Every input is a sum of five cosines: the frequency band is divided into five equal sub-bands and one frequency is drawn uniformly from each, the phases are uniform on $[-\pi, \pi]$, and the amplitudes are $A_\ell = s\, b_\ell / \sqrt{\sum_\ell b_\ell^2 / 2}$ with relative amplitudes $b_\ell$ uniform on $[0.7, 1.3]$ and a nominal root-mean-square level $s$.
For the pressure source of the hydraulic subsystem, $s \in \{0.8, 1.8, 2.6\}$, and for its boundary flow, $s \in \{0.12, 0.35, 0.65\}$; the band is $[0.002, 0.010]$ or $[0.002, 0.035]$. For each of these two inputs, the three levels and the two bands form six groups, and the group labels are shuffled separately for the two inputs, so that neither the levels nor the bands are constrained to match within a trajectory. For the boundary velocity of the mechanical subsystem, $s \in \{0.20, 0.56, 1.00\}$ with the band $[0.005, 0.020]$.
The reference trajectories of the composed system are generated as in Task~3.
The coupling variables follow from differentiating the force balance $A_\mathrm{p} P + F_\mathrm{b} = 0$ with $P = 0.8\, q_h$ and $F_\mathrm{b} = -e_1(q_1)$: $v_\mathrm{b} = (0.8 A_\mathrm{p} Q_1 + e_1'(q_1) v_1) / (0.8 A_\mathrm{p}^2 + e_1'(q_1))$ and $Q_\mathrm{b} = A_\mathrm{p} v_\mathrm{b}$, which are substituted into the ODE of the seven states; the initial state satisfies the force balance.
The five trajectories of 100 steps of the composed system, provided for the identification of the area and for the scales of the initialization, start from states consistent with the force balance and within the training ranges.

\paragraph{Test trajectories.}
The test trajectories of the composed system are generated by the rule of Task~3 (see Appendix~\ref{app:task3}), where the residual of the coupling constraints is $\max_k |A_\mathrm{p} P(t_k) + F_\mathrm{b}(t_k)|$.

\paragraph{DINEs.}
The reservoir is assumed to be linear with an unknown coefficient, and the fluid resistor is nonlinear; the I-type elements, springs, and dampers are treated as in Task~3.
The scale of the boundary pressure of the hydraulic subsystem is fixed from the learned interconnection as in Task~3, whereas the boundary force of the mechanical subsystem is calibrated from measurements as described in Appendix~\ref{app:coupling}, with an intercept in addition.
For the mechanical subsystem, $N_{\mathrm{cal}}=4$ observations of the boundary force, preselected from the training data of the subsystem by a fixed random draw, are used.
The area is identified from four fixed states of the five trajectories of the composed system by the closed-form least squares of Appendix~\ref{app:coupling}, $\hat{A}_\mathrm{p} = -\sum_j \hat{P}_j \hat{F}_{\mathrm{b},j} / \sum_j \hat{P}_j^2$, where $\hat{P}_j$ and $\hat{F}_{\mathrm{b},j}$ are the calibrated boundary outputs of the two subsystems at the $j$-th state.
The hydraulic boundary output $\hat{P}$ is evaluated with the boundary flow set to zero.
The observations of the composed system do not determine the initial rates. In the initial-state estimation, the observation error is therefore normalized by the sample standard deviations $\sigma_j$ of the observed channels over the five trajectories, and a penalty $\frac{10^{-6}}{2} \sum_j (\dot{\hat{y}}_{0,j} / \rho_j)^2$ on the initial rates of the observed channels is added, where $\rho_j$ are the sample standard deviations of the increments divided by $\Delta{t}$ over the same trajectories.

\paragraph{Coupled Neural ODEs.}
Coupled Neural ODEs are as in Task~3 (see Appendix~\ref{app:task3}), except that the boundary network predicts the boundary pressure of the hydraulic subsystem, whose boundary input is the flow, and the boundary force of the mechanical subsystem, whose boundary input is the velocity, and that, in the composed system, the boundary flow and velocity are unknown inputs, and the boundary networks supply the pressure and the force of the coupling constraints of the piston.
If the connection has a transformer with an unknown ratio, the ratio is identified from the same states of the composed system as for DINEs by the same closed-form least squares, with the outputs of the frozen boundary networks, and is then frozen.
The same normalized initial-state objective and rate penalty are used for Coupled Neural ODEs.

\subsection{Task 5: reuse of the learned spring}
\label{app:reuse}
\paragraph{Isolation.}
The donor is DINEs trained on subsystem A of Task~3, and all five training seeds are used.
The energy network of the second spring, including its quadratic term, is isolated and frozen as the constitutive relation $\hat{e}(q) = \partial_q \hat{H}(q)$.
The scale of the force is fixed from the learned interconnection as described in Appendix~\ref{app:coupling}, since the coefficients of the junctions have unit magnitude and the subsystem contains no unknown transformer ratio.
With the learned flow of the source port written in the observed velocities of the masses as $f_s = \sum_i \omega_i v_i$, the coefficient of the largest magnitude is selected for each seed, and the scale is
\begin{equation*}\textstyle
    i^\star = \operatorname*{arg\,max}_i |\omega_i|,
    \qquad
    \alpha = 1 / |\omega_{i^\star}|,
\end{equation*}
so that the inertia connected to the boundary is identified from the learned interconnection rather than prescribed by its index.
The reused constitutive relation is $g(q) = \alpha \hat{e}(q)$; the network and its displacement axis are unchanged, and no offset of the force is corrected.
Nothing else is transferred from the donor, neither $(F, E)$, $\chi_\eta$, the other springs, the dampers, nor the masses, and neither additional measurements of the boundary variable or of the spring force, nor the true constitutive relation of the spring, nor the response of the receiving system is used to determine the scale.
\paragraph{Receiving system.}
The receiving system is known: a wall is connected to a main mass $M$ by a linear spring $k_\mathrm{h}$ with a damper $c_\mathrm{h}$ in parallel, the input $u(t)$ acts on $M$, and an auxiliary mass $m_\mathrm{a}$ is connected to $M$ by the reused spring with a damper $c_\mathrm{a}$ in parallel.
With the positions $x$ and $x_\mathrm{a}$ measured from the wall and the relative displacement $r = x_\mathrm{a} - x$ of the reused spring,
\begin{equation*}
    \begin{aligned}
        M \ddot{x}                       & = -k_\mathrm{h} x - c_\mathrm{h} \dot{x} + g(r) + c_\mathrm{a} \dot{r} + u(t), \\
        m_\mathrm{a} \ddot{x}_\mathrm{a} & = -g(r) - c_\mathrm{a} \dot{r},
    \end{aligned}
\end{equation*}
where $g(r)$ is the derivative of the scaled energy of the spring with respect to $r$.
The observed channels are $y = (x, x_\mathrm{a}, v, v_\mathrm{a})$ with $v = \dot{x}$ and $v_\mathrm{a} = \dot{x}_\mathrm{a}$.
All coefficients are fixed, $M = 2$, $m_\mathrm{a} = 0.3$, $k_\mathrm{h} = 2$, $c_\mathrm{h} = 0.04$, and $c_\mathrm{a} = 0.02$, and all initial positions and velocities are zero.
The prediction is terminated if the argument of the reused constitutive relation, including its evaluations within the Newton iterations, leaves the range of the displacements of the second spring in the training data of the donor; the argument is never clamped to this range.

\paragraph{Data.}
The reference trajectories use the true relation of the second spring of subsystem A of Task~3, $e_\star(q) = 0.3 q + 0.1 q^3$, in place of $g$ in the equations of the receiving system, and are generated and checked as in Appendix~\ref{app:common}.
Each input is a sum of five cosines, $u(t) = \sum_{\ell=1}^{5} A_\ell \cos(2\pi \nu_\ell\, t / \Delta t + \phi_\ell)$, with one frequency drawn uniformly from each of the bands $[0.005, 0.010]$, $[0.010, 0.015]$, $[0.015, 0.020]$, $[0.020, 0.025]$, and $[0.025, 0.030]$, phases uniform on $[0, 2\pi]$, and amplitudes $A_\ell = s\, b_\ell / \sqrt{\sum_\ell b_\ell^2 / 2}$ with relative amplitudes $b_\ell$ uniform on $[0.7, 1.3]$ and a nominal root-mean-square level $s \in \{0.05, 0.15\}$.
At each level, candidates are generated in order, at most 500 per level, and the first five whose reference responses keep the deformation of the reused spring within the interval between the first and the 99th percentiles of the displacements of the second spring in the training data of the donor are retained; no candidate was rejected.
This selection uses the reference system, not the prediction errors, and the same ten inputs are used for all five seeds of the donor.
Each test trajectory has 10{,}000 steps, as in the other tasks.

\section{Additional results and discussions}
\label{app:additional}
\paragraph{Prior knowledge given to the models.}
Table~\ref{tab:prior} summarizes the prior knowledge and assumptions given to the models.
The available prior knowledge is not identical across the models, but the comparison was performed under the same data and evaluation procedures.

\begin{table}[t]
    \centering
    \caption{Prior knowledge given to the models. The last row is used for scoring only.}
    \label{tab:prior}
    \scriptsize
    \setlength{\tabcolsep}{4pt}
    \begin{tabularx}{\linewidth}{>{\hsize=2.0\hsize}Y >{\hsize=0.8\hsize}Y >{\hsize=0.8\hsize}Y >{\hsize=0.8\hsize}Y >{\hsize=0.8\hsize}Y >{\hsize=0.8\hsize}Y}
        \toprule
                                                                               & DINEs                                 & Neural ODEs & PoDiNNs         & CT-SUBNET  & Coupled Neural ODEs         \\
        \midrule
        Training observations and known inputs                                 & yes                                   & yes         & yes             & yes        & yes                         \\
        Types and numbers of elements                                          & yes                                   & no          & C-type, I-type  & no         & dimensions only             \\
        Correspondence of observations to elements                             & yes                                   & as states   & yes             & as outputs & states, boundary port       \\
        Linearity of elements (coefficients unknown)                           & yes                                   & no          & I-type elements & no         & no                          \\
        $(n_f, n_e)$                                                           & yes                                   & no          & no              & no         & no                          \\
        True wiring and coefficients                                           & no                                    & no          & no              & no         & no                          \\
        Unobserved true states                                                 & no                                    & --          & --              & no         & --                          \\
        Boundary records of the subsystem training data                        & calibration (Task 4, mechanical only) & --          & --              & --         & boundary-network training   \\
        Connection of the boundary ports at composition                        & yes                                   & --          & --              & --         & yes                         \\
        States of the composed system (Task 4)                                 & area, initialization scales           & --          & --              & --         & area, initialization scales \\
        Displacement--force measurements of the true spring (Task 5)           & no                                    & --          & --              & --         & --                          \\
        Test observations after the first sample (after the history on Task 2) & scoring                               & scoring     & scoring         & scoring    & scoring                     \\
        \bottomrule
    \end{tabularx}
\end{table}

\paragraph{VPT for further thresholds.}
\begin{table}[t]
    \centering
    \caption{VPT of Tasks 1--5 for further thresholds $\varepsilon$.}
    \label{tab:thresholds}
    \scriptsize
    \setlength{\tabcolsep}{4pt}
    \begin{tabular}{llcccc}
        \toprule
        \textbf{Task} & \textbf{Model}         & $\varepsilon = 10^{-6}$                                     & $10^{-5}$                                     & $10^{-4}$                                     & $10^{-3}$                                     \\
        \midrule
        1             & Neural ODEs            & $0.008 \pm 0.001$                                           & $0.023 \pm 0.004$                             & $0.062 \pm 0.015$                             & $0.160 \pm 0.046$                             \\
        1             & PoDiNNs                & $0.000 \pm 0.000$                                           & $0.000 \pm 0.000$                             & $0.000 \pm 0.000$                             & $0.001 \pm 0.000$                             \\
        1             & DINEs (soft)$^\dagger$ & $0.270 \pm 0.160$ & $0.548 \pm 0.102$ & $0.898 \pm 0.022$ & $1.000 \pm 0.000$ \\
        1             & DINEs (ours)           & $0.204 \pm 0.028$                                           & $0.525 \pm 0.031$                             & $0.860 \pm 0.043$                             & $0.988 \pm 0.011$                             \\
        \midrule
        2             & CT-SUBNET ($n_z = 4$)  & $0.000 \pm 0.000$                                           & $0.000 \pm 0.000$                             & $0.039 \pm 0.011$                             & $0.681 \pm 0.036$                             \\
        2             & CT-SUBNET ($n_z = 8$)  & $0.000 \pm 0.000$                                           & $0.001 \pm 0.000$                             & $0.096 \pm 0.084$                             & $0.773 \pm 0.037$                             \\
        2             & CT-SUBNET ($n_z = 12$) & $0.000 \pm 0.000$                                           & $0.000 \pm 0.000$                             & $0.157 \pm 0.023$                             & $0.759 \pm 0.048$                             \\
        2             & CT-SUBNET ($n_z = 16$) & $0.000 \pm 0.000$                                           & $0.001 \pm 0.001$                             & $0.205 \pm 0.055$                             & $0.777 \pm 0.033$                             \\
        2             & CT-SUBNET ($n_z = 20$) & $0.000 \pm 0.000$                                           & $0.002 \pm 0.001$                             & $0.241 \pm 0.018$                             & $0.800 \pm 0.048$                             \\
        2             & DINEs (soft)           & (not converged)                                                                                                                                                                 \\
        2             & DINEs (ours)           & $0.000 \pm 0.000$                                           & $0.001 \pm 0.000$                             & $0.711 \pm 0.002$                             & $0.904 \pm 0.001$                             \\
        \bottomrule
        \addlinespace[2ex]
        \toprule
        \textbf{Task} & \textbf{Model}         & $\varepsilon = 10^{-8}$                                     & $10^{-7}$                                     & $10^{-6}$                                     & $10^{-5}$                                     \\
        \midrule
        3             & Coupled Neural ODEs    & $0.000 \pm 0.000$                                           & $0.000 \pm 0.000$                             & $0.002 \pm 0.001$                             & $0.020 \pm 0.005$                             \\
        3             & DINEs (ours)           & $0.003 \pm 0.001$               & $0.032 \pm 0.018$ & $0.970 \pm 0.068$ & $1.000 \pm 0.000$ \\
        \midrule
        4             & Coupled Neural ODEs    & $0.000 \pm 0.000$                                           & $0.001 \pm 0.000$                             & $0.004 \pm 0.001$                             & $0.268 \pm 0.156$                             \\
        4             & DINEs (ours)           & $0.002 \pm 0.002$               & $0.121 \pm 0.228$ & $0.831 \pm 0.321$ & $1.000 \pm 0.000$ \\
        \midrule
        5             & DINEs (ours)           & $0.001 \pm 0.001$               & $0.189 \pm 0.164$ & $0.968 \pm 0.050$ & $1.000 \pm 0.000$ \\
        \bottomrule
    \end{tabular}\\\raggedright
    $^\dagger$~3 of 5 seeds did not converge; averaged over the remaining 2.
\end{table}

Table~\ref{tab:thresholds} reports the VPT for further thresholds $\varepsilon$.
The superiority of DINEs over the other models is consistent.

\paragraph{Readout of the learned interconnection.}
We show examples of the learned $W_f$ and $W_e$ on Tasks~1 and~2 and read the interconnection from them.
For each matrix, linearly independent columns are chosen as the pivot columns and moved to the front, and the matrix is brought to reduced row echelon form, in which the pivot columns form the identity matrix.
Since neither system contains transformers, the nonzero coefficients should be $\pm 1$.
The coefficients of the elements whose port variables are unobserved, the resistive elements in this case, have gauge freedoms, as mentioned in Appendix~\ref{app:learning}.
These gauge freedoms are eliminated by normalizing the coefficients to $\pm 1$.
In principle, a very small coefficient could be canceled in scale by a very large flow, effort, or constitutive relation of the element (that is, the resistance) and accidentally removed; this does not happen thanks to the regularization term $\mathcal{L}_{\mathrm{var}}$ (as defined in Appendix~\ref{app:learning}).
Since the learned coefficients are continuous values, those with absolute values of at most $10^{-2}$ are treated as zero.

\textit{Task~1: three-way junction.}
The ports are ordered as $(\mathrm{C}_1, \mathrm{C}_2, \mathrm{C}_3, \mathrm{I}_1, \mathrm{I}_2, \mathrm{I}_3)$, the learned flow constraint matrix and its reduced row echelon form are
\begin{equation*}
    W_f \simeq
    \begin{blockarray}{cccccc}
        \scriptstyle\mathrm{C}_1 & \scriptstyle\mathrm{C}_2 & \scriptstyle\mathrm{C}_3 & \scriptstyle\mathrm{I}_1 & \scriptstyle\mathrm{I}_2 & \scriptstyle\mathrm{I}_3 \\
        \begin{block}{[rrrrrr]}
            0.572  & -0.355 & -0.217 & 0.572  & -0.355 & -0.217 \\
            -0.080 & -0.455 & 0.535  & -0.080 & -0.455 & 0.535  \\
        \end{block}
    \end{blockarray},
\end{equation*}
\begin{equation*}
    \operatorname{rref}(W_f) \simeq
    \begin{blockarray}{cccccc}
        \scriptstyle\mathrm{C}_1 & \scriptstyle\mathrm{C}_2 & \scriptstyle\mathrm{C}_3 & \scriptstyle\mathrm{I}_1 & \scriptstyle\mathrm{I}_2 & \scriptstyle\mathrm{I}_3 \\
        \begin{block}{[rrrrrr]}
            1.000 & 0.000 & -1.000 & 1.000 & 0.000 & -1.000 \\
            0.000 & 1.000 & -1.000 & 0.000 & 1.000 & -1.000 \\
        \end{block}
    \end{blockarray}.
\end{equation*}
Substituting $f_{\mathrm{C}_i} = -\dot{q}_i$ and $f_{\mathrm{I}_i} = v_i$ into the rows gives
\begin{equation*}
    \dot{q}_i = v_i - v_J \qquad (i = 1, 2, 3), \qquad v_J := v_1 - \dot{q}_1.
\end{equation*}
This implies that each spring $i$ with the velocity $\dot{q}_i$ is connected between the same junction with the velocity $v_J$ and its own mass with the velocity $v_i$.

The ports are ordered as $(\mathrm{C}_3, \mathrm{I}_1, \mathrm{I}_2, \mathrm{I}_3, \mathrm{C}_1, \mathrm{C}_2)$ so that four independent columns come first, the learned effort constraint matrix and its reduced row echelon form are
\begin{equation*}
    W_e \simeq
    \begin{blockarray}{cccccc}
        \scriptstyle\mathrm{C}_3 & \scriptstyle\mathrm{I}_1 & \scriptstyle\mathrm{I}_2 & \scriptstyle\mathrm{I}_3 & \scriptstyle\mathrm{C}_1 & \scriptstyle\mathrm{C}_2 \\
        \begin{block}{[rrrrrr]}
            0.456  & -0.431 & 0.282  & -0.522 & 0.365 & -0.348 \\
            -0.036 & -0.371 & -0.153 & 0.359  & 0.694 & 0.476  \\
            0.336  & 0.496  & 0.674  & 0.374  & 0.214 & 0.035  \\
            -0.587 & -0.312 & 0.330  & 0.354  & 0.079 & -0.563 \\
        \end{block}
    \end{blockarray},
\end{equation*}
\begin{equation*}
    \operatorname{rref}(W_e) \simeq
    \begin{blockarray}{cccccc}
        \scriptstyle\mathrm{C}_3 & \scriptstyle\mathrm{I}_1 & \scriptstyle\mathrm{I}_2 & \scriptstyle\mathrm{I}_3 & \scriptstyle\mathrm{C}_1 & \scriptstyle\mathrm{C}_2 \\
        \begin{block}{[rrrrrr]}
            1.000 & 0.000 & 0.000 & 0.000 & 1.000  & 1.000  \\
            0.000 & 1.000 & 0.000 & 0.000 & -1.000 & 0.000  \\
            0.000 & 0.000 & 1.000 & 0.000 & 0.000  & -1.000 \\
            0.000 & 0.000 & 0.000 & 1.000 & 1.000  & 1.000  \\
        \end{block}
    \end{blockarray}.
\end{equation*}
The rows give
\begin{equation*}
    e_{\mathrm{C}_1} + e_{\mathrm{C}_2} + e_{\mathrm{C}_3} = 0, \qquad e_{\mathrm{I}_i} = e_{\mathrm{C}_i} \quad (i = 1, 2, 3).
\end{equation*}
This implies the force balance at the junction and the action and reaction forces between the springs $\mathrm{C}_i$ and their masses $\mathrm{I}_i$.

\textit{Task~2: RC ladder.}
The flows of three of the seven resistive ports are unobserved.
The ports are ordered as $(\mathrm{C}_4, \mathrm{R}_1, \ldots, \mathrm{R}_7 \mid \mathrm{Se}_1, \mathrm{Se}_2, \mathrm{C}_1, \mathrm{C}_2, \mathrm{C}_3)$, where the resistors are numbered by their ports in the learned model and the vertical line separates the pivot columns, the learned flow constraint matrix is
\begingroup\small\makeatletter\BA@colsep=2pt\makeatother
\begin{equation*}
    W_f \simeq
    \begin{blockarray}{ccccccccccccc}
        \scriptstyle\mathrm{C}_4 & \scriptstyle\mathrm{R}_1 & \scriptstyle\mathrm{R}_2 & \scriptstyle\mathrm{R}_3 & \scriptstyle\mathrm{R}_4 & \scriptstyle\mathrm{R}_5 & \scriptstyle\mathrm{R}_6 & \scriptstyle\mathrm{R}_7 & \scriptstyle\mathrm{Se}_1 & \scriptstyle\mathrm{Se}_2 & \scriptstyle\mathrm{C}_1 & \scriptstyle\mathrm{C}_2 & \scriptstyle\mathrm{C}_3 \\
        \begin{block}{[rrrrrrrr|rrrrr]}
            .586  & -.173 & .446  & -.264 & .054  & -.044 & .195  & .284  & -.173 & .189  & -.120 & .195  & .340  \\
            .061  & .322  & .049  & -.293 & -.308 & .238  & .477  & .117  & .006  & -.497 & -.249 & -.267 & -.180 \\
            -.169 & -.271 & .032  & .041  & -.368 & .198  & -.363 & .216  & -.216 & .041  & -.150 & -.563 & .392  \\
            -.209 & -.201 & .119  & .114  & .438  & -.189 & -.092 & .335  & -.488 & -.352 & -.291 & .036  & -.305 \\
            -.304 & .163  & .167  & -.103 & .316  & .630  & -.032 & -.305 & -.159 & -.059 & -.142 & .276  & .355  \\
            .263  & -.194 & -.565 & .360  & .266  & .282  & .261  & .292  & .122  & -.206 & .079  & -.010 & .280  \\
            -.200 & .137  & -.079 & -.367 & -.034 & -.044 & -.381 & .543  & .431  & -.197 & .011  & .338  & .138  \\
            .056  & -.204 & -.452 & -.426 & .153  & -.106 & -.019 & -.230 & .105  & .234  & -.640 & -.066 & -.036 \\
        \end{block}
    \end{blockarray},
\end{equation*}
\endgroup
where $\mathrm{R}_1$, $\mathrm{R}_2$, and $\mathrm{R}_7$ are unobserved.
Its reduced row echelon form is $\operatorname{rref}(W_f) = [\, I_8 \;\; B_f \,]$ with
\begin{equation*}
    B_f \simeq
    \begin{blockarray}{ccccc}
        \scriptstyle\mathrm{Se}_1 & \scriptstyle\mathrm{Se}_2 & \scriptstyle\mathrm{C}_1 & \scriptstyle\mathrm{C}_2 & \scriptstyle\mathrm{C}_3 \\
        \begin{block}{[rrrrr]}
            1.000  & 1.000  & 0.998  & 0.996  & 0.994  \\
            1.426  & 0.008  & 1.425  & 1.426  & 0.004  \\
            -0.708 & -0.001 & 0.002  & 0.003  & 0.000  \\
            0.001  & 0.002  & 0.995  & 0.006  & 0.006  \\
            -0.002 & -0.002 & -0.004 & 1.000  & -0.005 \\
            -0.003 & -0.004 & -0.001 & -0.004 & 0.992  \\
            -1.003 & -1.003 & -1.003 & -0.997 & -1.001 \\
            -0.004 & -0.701 & -0.003 & -0.004 & -0.001 \\
        \end{block}
    \end{blockarray}.
\end{equation*}
Omitting the small coefficients and scaling the remaining coefficients as $\pm 1$ according to their signs gives
\begin{equation*}
    \widehat{B}_f =
    \begin{blockarray}{ccccc}
        \scriptstyle\mathrm{Se}_1 & \scriptstyle\mathrm{Se}_2 & \scriptstyle\mathrm{C}_1 & \scriptstyle\mathrm{C}_2 & \scriptstyle\mathrm{C}_3 \\
        \begin{block}{[rrrrr]}
            1  & 1  & 1  & 1  & 1  \\
            1  & 0  & 1  & 1  & 0  \\
            -1 & 0  & 0  & 0  & 0  \\
            0  & 0  & 1  & 0  & 0  \\
            0  & 0  & 0  & 1  & 0  \\
            0  & 0  & 0  & 0  & 1  \\
            -1 & -1 & -1 & -1 & -1 \\
            0  & -1 & 0  & 0  & 0  \\
        \end{block}
    \end{blockarray}.
\end{equation*}
Each row represents a Kirchhoff current law at a node of the circuit.
With the currents $I_{\mathrm{E}_1} = f_{\mathrm{Se}_1}$, $I_{\mathrm{E}_2} = f_{\mathrm{Se}_2}$, and $I_{\mathrm{C}_i} = -f_{\mathrm{C}_i}$, the rows give
\begin{gather*}
    f_{\mathrm{R}_1} = -I_{\mathrm{E}_1} + I_{\mathrm{C}_1} + I_{\mathrm{C}_2}, \qquad f_{\mathrm{R}_2} = I_{\mathrm{E}_1}, \qquad f_{\mathrm{R}_7} = I_{\mathrm{E}_2}, \\
    f_{\mathrm{R}_3} = I_{\mathrm{C}_1}, \qquad f_{\mathrm{R}_4} = I_{\mathrm{C}_2}, \qquad f_{\mathrm{R}_5} = I_{\mathrm{C}_3}, \qquad f_{\mathrm{R}_6} = I_{\mathrm{C}_4}, \\
    I_{\mathrm{E}_1} + I_{\mathrm{E}_2} = I_{\mathrm{C}_1} + I_{\mathrm{C}_2} + I_{\mathrm{C}_3} + I_{\mathrm{C}_4}.
\end{gather*}

The ports are ordered as $(\mathrm{Se}_1, \mathrm{Se}_2, \mathrm{C}_1, \mathrm{C}_2, \mathrm{C}_3 \mid \mathrm{C}_4, \mathrm{R}_1, \ldots, \mathrm{R}_7)$, the learned effort constraint matrix is
\begingroup\small\makeatletter\BA@colsep=2pt\makeatother
\begin{equation*}
    W_e \simeq
    \begin{blockarray}{ccccccccccccc}
        \scriptstyle\mathrm{Se}_1 & \scriptstyle\mathrm{Se}_2 & \scriptstyle\mathrm{C}_1 & \scriptstyle\mathrm{C}_2 & \scriptstyle\mathrm{C}_3 & \scriptstyle\mathrm{C}_4 & \scriptstyle\mathrm{R}_1 & \scriptstyle\mathrm{R}_2 & \scriptstyle\mathrm{R}_3 & \scriptstyle\mathrm{R}_4 & \scriptstyle\mathrm{R}_5 & \scriptstyle\mathrm{R}_6 & \scriptstyle\mathrm{R}_7 \\
        \begin{block}{[rrrrr|rrrrrrrr]}
            -.031 & -.415 & -.085 & -.146 & .485  & .194  & .374  & -.021 & .083  & .147  & -.484 & -.192 & -.292 \\
            -.182 & .290  & -.306 & .080  & -.204 & .320  & .580  & -.128 & .305  & -.082 & .203  & -.323 & .202  \\
            .099  & -.429 & -.157 & .334  & -.179 & .333  & -.389 & .069  & .156  & -.336 & .178  & -.335 & -.300 \\
            .179  & .103  & -.462 & .295  & .230  & -.343 & -.020 & .127  & .456  & -.296 & -.226 & .343  & .072  \\
            -.611 & -.021 & .201  & .389  & .135  & -.091 & .029  & -.434 & -.203 & -.389 & -.135 & .091  & -.015 \\
        \end{block}
    \end{blockarray},
\end{equation*}
\endgroup
and its reduced row echelon form is $\operatorname{rref}(W_e) = [\, I_5 \;\; B_e \,]$ with
\begin{equation*}
    B_e \simeq
    \begin{blockarray}{cccccccc}
        \scriptstyle\mathrm{C}_4 & \scriptstyle\mathrm{R}_1 & \scriptstyle\mathrm{R}_2 & \scriptstyle\mathrm{R}_3 & \scriptstyle\mathrm{R}_4 & \scriptstyle\mathrm{R}_5 & \scriptstyle\mathrm{R}_6 & \scriptstyle\mathrm{R}_7 \\
        \begin{block}{[rrrrrrrr]}
            -1.000 & -1.426 & 0.708  & -0.001 & 0.002  & 0.003  & 1.003 & 0.004 \\
            -1.000 & -0.008 & 0.001  & -0.002 & 0.002  & 0.004  & 1.003 & 0.701 \\
            -0.998 & -1.425 & -0.002 & -0.995 & 0.004  & 0.001  & 1.003 & 0.003 \\
            -0.996 & -1.426 & -0.003 & -0.006 & -1.000 & 0.004  & 0.997 & 0.004 \\
            -0.994 & -0.004 & 0.000  & -0.006 & 0.005  & -0.992 & 1.001 & 0.001 \\
        \end{block}
    \end{blockarray}.
\end{equation*}

Omitting the small coefficients and scaling the remaining coefficients as $\pm 1$ according to their signs gives
\begin{equation*}
    \widehat{B}_e =
    \begin{blockarray}{cccccccc}
        \scriptstyle\mathrm{C}_4 & \scriptstyle\mathrm{R}_1 & \scriptstyle\mathrm{R}_2 & \scriptstyle\mathrm{R}_3 & \scriptstyle\mathrm{R}_4 & \scriptstyle\mathrm{R}_5 & \scriptstyle\mathrm{R}_6 & \scriptstyle\mathrm{R}_7 \\
        \begin{block}{[cccccccc]}
            -1 & -1 & 1 & 0  & 0  & 0  & 1 & 0 \\
            -1 & 0  & 0 & 0  & 0  & 0  & 1 & 1 \\
            -1 & -1 & 0 & -1 & 0  & 0  & 1 & 0 \\
            -1 & -1 & 0 & 0  & -1 & 0  & 1 & 0 \\
            -1 & 0  & 0 & 0  & 0  & -1 & 1 & 0 \\
        \end{block}
    \end{blockarray}.
\end{equation*}
Each row represents a Kirchhoff voltage law around an independent loop of the circuit.
We can confirm $\widehat{B}_e = -\widehat{B}_f^\top$.
The rows give two common voltages
\begin{align*}
    V_L & := e_{\mathrm{Se}_1} + e_{\mathrm{R}_2} = e_{\mathrm{C}_1} - e_{\mathrm{R}_3} = e_{\mathrm{C}_2} - e_{\mathrm{R}_4}, \\
    V_R & := e_{\mathrm{Se}_2} + e_{\mathrm{R}_7} = e_{\mathrm{C}_3} - e_{\mathrm{R}_5} = e_{\mathrm{C}_4} - e_{\mathrm{R}_6},
\end{align*}
and $e_{\mathrm{R}_1} = V_L - V_R$.

From these relations, we can reconstruct the circuit diagram of the RC ladder, including the three resistors whose roles in the interconnection were unknown.
\end{document}